
\documentclass{article}

\usepackage{microtype}
\usepackage{graphicx}
\usepackage{subfigure}
\usepackage{booktabs} 
\usepackage{multirow}

\usepackage{hyperref}


\usepackage[accepted]{icml2021}


\usepackage{amsmath,amsfonts,amsthm,bm}
\usepackage{bbm}

\newtheorem{theorem}{Theorem}
\newtheorem{lemma}{Lemma}

\newtheorem{assumption}{Assumption}









\def\eqref#1{equation~\ref{#1}}









\def\1{\mathbbm{1}}










\DeclareMathAlphabet{\mathsfit}{\encodingdefault}{\sfdefault}{m}{sl}
\SetMathAlphabet{\mathsfit}{bold}{\encodingdefault}{\sfdefault}{bx}{n}











\newcommand{\E}{\mathbb{E}}



\DeclareMathOperator*{\argmin}{arg\,min}

\newcommand{\X}{\mathcal{X}}

\newtheorem{corollary}{Corollary}
\newtheorem{definition}{Definition}

\newtheorem{remark}{Remark}

\theoremstyle{definition}

\usepackage[dvipsnames]{xcolor}

\usepackage{changepage}


\icmltitlerunning{Active Covering}

\begin{document}

\twocolumn[
\icmltitle{Active Covering}



\icmlsetsymbol{equal}{*}

\begin{icmlauthorlist}
\icmlauthor{Heinrich Jiang}{goo}
\icmlauthor{Afshin Rostamizadeh}{goo}
\end{icmlauthorlist}

\icmlaffiliation{goo}{Google Research}
\icmlcorrespondingauthor{Heinrich Jiang}{heinrichj@google.com}

\icmlkeywords{Active Learning}

\vskip 0.3in
]



\printAffiliationsAndNotice{} 

\begin{abstract}
We analyze the problem of active covering, where the learner is given an unlabeled dataset and can sequentially label query examples. The objective is to label query all of the positive examples in the fewest number of total label queries. We show under standard non-parametric assumptions that a classical support estimator can be repurposed as an offline algorithm attaining an excess query cost of $\widetilde{\Theta}(n^{D/(D+1)})$ compared to the optimal learner, where $n$ is the number of datapoints and $D$ is the dimension. We then provide a simple active learning method that attains an improved excess query cost of $\widetilde{O}(n^{(D-1)/D})$. Furthermore, the proposed algorithms only require access to the positive labeled examples, which in certain settings provides additional computational and privacy benefits. Finally, we show that the active learning method consistently outperforms offline methods as well as a variety of baselines on a wide range of benchmark image-based datasets.
\end{abstract}
\section{Introduction}

Active learning is an increasingly important practical area of machine learning as the amount of data grows faster than the resources to label these datapoints, which can be costly. In this paper, we introduce a variant of the active learning problem, called active covering, where the goal is to label query all of the positive examples given an unlabeled dataset in as few total queries as possible.

Active covering arises in many machine learning problems. In credit card fraud detection, one goal is to find the instances of fraud with as few queries as possible to the user asking if a transaction was fraudulent \cite{awoyemi2017credit}. In mineral exploration, the goal is to find all of the valuable resources with as little exploration as necessary \cite{acosta2019machine}. In computational drug discovery, one goal is to discover all of the effective drugs with as few suggested candidates as possible, as running trials on each candidate can be costly \cite{ou2012computational}. In bank loan applications, a label query means granting the loan to an applicant (as the only way to know if the applicant would default or not)-- as such, negative label queries can be costly \cite{khandani2010consumer}.   Finally, many internet applications need to moderate abusive content, and the goal is to quickly identify and remove all of the abusive content with as few label queries as possible to human operators \cite{nobata2016abusive}. 

For our analysis, we assume that an unlabeled pool of $n$ datapoints is generated by drawing i.i.d. samples from a mixture of two distributions, where one of the distributions represents the positive examples and the other the negative examples. Our goal is to retrieve all of the positive examples from this pool with as few label queries as possible.
Furthermore, we assume that the support of the positive examples is compact and its density function is lower bounded by a positive quantity. Finally, we leverage a few additional standard and mild nonparametric regularity assumptions on the the curvature of the support of positive examples. These assumptions allow for a wide range of distributions (e.g. mixtures of truncated multivariate Gaussians), and in particular the support of positives and negatives can be of any shape and can arbitrarily overlap.

We first establish the optimal active covering algorithm, which has knowledge of the support of positive examples. This algorithm, thus, will query all of the examples that lie in the support of positive examples (which will contain some negative examples where there is an overlap with support of negative examples). Then, the performance of any active covering algorithm can be compared to the optimal learner via the expected {\it excess query cost} that the algorithm incurs in addition to what the optimal learner is expected to incur. 

To begin with, we analyze an offline algorithm that is based on a classical method in support estimation, providing both upper and lower bounds on the excess query cost. We then provide an active learning procedure based on the explore-then-commit strategy, where we first explore with an initial random sample, and then exploit by querying the example closest to any of the positive examples found thus far. We show that the active learning procedure achieves a provably better excess query cost than the offline algorithm. 

The presented methods also have the beneficial property of using only the queried positive examples for learning. This not only has desirable computational implications in that we don't need to store the negative examples (especially in highly label imbalanced datasets with few positives), but also is practical in situations where there are strict privacy requirements for the negative examples. For example, this arises in the problem of fake account detection faced by social network platforms \citep{garcia2016discouraging}, where real account information is highly sensitive data and may even not be allowed to use for training. Another such application is spam email detection, where it may be desirable (or necessary) to avoid training with non-spam emails \citep{li2008privacy}. While this privacy aspect is not a focus of the paper, it may be a feature of independent interest.

We now summarize our contributions as follows.
\begin{itemize}
    \item In Section~\ref{sec:active_covering}, we introduce and formalize the active covering problem and establish the notion of excess query cost.
    \item In Section~\ref{sec:offline}, we analyze the offline learner and show matching upper and lower bounds on the excess query cost of $\widetilde{\Theta}(n^{D/(D+1)})$.
    \item In Section~\ref{sec:explore_commit}, we introduce and analyze the Explore-then-Commit active algorithm and show it has excess query cost $\widetilde{O}(n^{(D-1)/D})$.
    \item In Section~\ref{sec:experiments}, we show empirical results on a wide range of benchmark image-based datasets (Letters, MNIST, Fashion MNIST, CIFAR10, CelebA) comparing the Explore-then-Commit algorithm to a number of offline and active baselines.
\end{itemize}

\section{Active Covering}\label{sec:active_covering}
In this section, we formulate the active covering problem. We are given an unlabeled dataset of $n$ datapoints, $X$, and the goal is to minimize the number of label queries necessary until all positive examples are labeled. We provide the assumptions on the underlying distribution from which $X$ is drawn from and establish the notion of excess query cost, which is the additional label queries compared to the optimal procedure, which will be defined later.

\subsection{Theoretical Setup}

We make the following assumption on the data generating process, which says that with probability $p$, we draw a positive example from distribution $\mathcal{P}_+$ and with probability $1 - p$, we draw a negative example from distribution $\mathcal{P}_-$.
\begin{assumption}\label{a1}
The dataset $X$ is drawn i.i.d. from a distribution $\mathcal{P} := p\cdot \mathcal{P}_+ + (1-p)\cdot \mathcal{P}_-$, for some $p \in (0, 1)$ and 
$\mathcal{P}_+$ and $\mathcal{P}_-$ are distributions over $\mathbb{R}^D$ of the positive and negative examples respectively.
\end{assumption}
We then require the following regularity assumption on the distribution of positive examples. The first part ensures that the density of positive examples is lower bounded in its support; otherwise, some positive examples will only appear as outliers, making it impossible for any learner to find them in a non-trivial manner. The second part ensures that the support of positive examples does not become arbitrarily thin anywhere, otherwise there may not be any samples drawn from these regions and it will be very difficult to recover the entire support from a finite sample. This is a standard assumption in a variety of density estimation scenarios e.g. \cite{cuevas1997plug,singh2009adaptive}.
\begin{assumption}\label{a2}
 There exists density function $f_+ : \mathbb{R}^D \rightarrow \mathbb{R}$  corresponding to $\mathcal{P}_+$ with a compact support $\X_+$ that can be decomposed into a finite number of connected components. There exists $\lambda_0, r_0, C_+ > 0$ such that the following holds:
 \begin{itemize}

     \item $f_+(x) \ge \lambda_0$ for all $x \in \X_+$.
     
     \item For all $0 < r < r_0$ and $x\in \X_+$, we have $\text{Vol}(B(x, r) \cap X_+) \ge C_+ \cdot \text{Vol}(B(x, r))$, where $B(x, r) := \{x' \in \mathbb{R}^D : |x - x'| \le r\}$.
 \end{itemize}
\end{assumption}

The final assumption ensures that the negative example's density is upper bounded to ensure that there are no possible arbitrarily dense clumps of negative examples near the boundary of $\X_+$. Otherwise, querying in such regions may yield too few positive examples and it may be difficult to learn that such a region is part of the support of positive examples.
\begin{assumption}\label{a3}
There exists density function $f_- : \mathbb{R}^D \rightarrow \mathbb{R}$ corresponding to $\mathcal{P}_-$ and $\lambda_1 > 0$ such that $f_-(x) \le \lambda_1$ for all $x \in \X$.
\end{assumption}

Our assumptions are quite mild as they allow for a wide range of distributions. In particular, they are non-parametric so there are no assumptions that the data is generated based on a parameterized model. Moreover, the support of the positive examples (as well as the support of the negative examples) can be of arbitrary bounded shape, need not be a single connected component (i.e. can appear as a number of connected components), and can intersect arbitrarily with the support of the opposite label. Such mild assumptions can model the wide range of data distributions arising in practice. 

Perhaps the strongest of our assumptions is that the density on $\X_+$ is lower bounded. We stress that if the density of the positive examples on $\X_+$ can become arbitrarily small, then some regions can become so low-density that the positive examples in those regions can be considered outliers. For such outliers, it may be unrealistic to expect a procedure to efficiently find them. Nonetheless, in practice, our methodology can still be applied on datasets containing positive outliers, but without guarantees that those outliers will be efficiently recovered.
We made this assumption to keep our theoretical analysis from becoming too involved; however it's worth noting that it is possible to relax this assumption and allow the density to have smooth boundaries. To do so, we would assume parameters that bound the rate of decay of the density as it goes to $0$ and provide final guarantees that depend on the decay parameters. Assumptions used in recent analysis on densities with smooth boundaries \citep{zhao2020analysis} can be adapted here, which is a future research direction. 


\subsection{Optimal Learner and Excess Query Cost}

We will analyze a number of new learners for this problem. To do so, we establish a metric called {\it excess query cost}, which compares the learner to that of the the {\it optimal} learner, which takes the optimal strategy given knowledge of $\X_+$. This learner is unattainable in practice, but serves as a theoretical limit in which to quantify the excess query cost of an algorithm with respect to, to be defined below.

The optimal learner has knowledge of $\X_+$ and therefore its strategy is to label query every point in $\X_+$. Let $Q_{\text{opt}}$ be the number of label queries incurred by the optimal learner. 
Thus, the optimal learner attains an expected number of queries of:
\begin{align*}
    \E[Q_{\text{opt}}] = n \cdot \mathcal{P}(\X_+).
\end{align*}
We can then for any algorithm define the notion of excess query cost, which is the additional label queries needed compared to the optimal learner. That is, if the cost of an algorithm is $C$, then the excess query cost is
\begin{definition}[Excess Query Cost]
Suppose that an algorithm needs to make $Q$ label queries before labeling all of the positive examples. Then the excess query cost of the procedure is defined as: $C := Q - Q_{\text{opt}}$.
\end{definition}
For the results in the paper, we analyze the {\it expected} excess query cost, where the expectation is taken over the distribution from which sample $X$ is drawn from.

\subsection{Passive Learner}
We first provide a result for the passive learning algorithm, which queries labels uniformly at random until all positive examples have been retrieved and serves as our most basic baseline. In the following theorem we show a lower bound on the excess query cost that is linear in the pool size.
\begin{theorem}[Lower bound for Passive Learner]\label{theorem:passive}
There exists a distribution satisfying Assumptions~\ref{a1},~\ref{a2}, and~\ref{a3} such that with probability at least $\frac{1}{2}$, we have (letting $C_{\text{passive}}$ be the excess query cost of the passive learner):
    $\E[C_{\text{passive}}] \ge \frac{1}{2} \cdot n$.
\end{theorem}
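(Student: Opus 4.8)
The plan is to exhibit a single fixed, very simple distribution on which the optimal learner finishes after querying only the few points that land in $\X_+$, whereas the passive learner (which queries in uniformly random order) must in expectation scan almost the entire pool before the last positive surfaces. Take $\mathcal{P}_+$ uniform on the unit ball $\X_+ = B(0,1)$, take $\mathcal{P}_-$ uniform on some unit ball disjoint from $\X_+$, and fix any mixing weight $p \in (0,1/2)$, say $p = 1/4$. All three assumptions hold with explicit constants: $f_+ \equiv \lambda_0 := 1/\mathrm{Vol}(B(0,1))$ on the compact, connected set $\X_+$; the interior regularity condition of Assumption~\ref{a2} holds with, e.g., $C_+ = 1/4$ for all $r < r_0$ with $r_0$ small (the only case to check is $x \in \partial\X_+$, where $B(x,r)\cap\X_+$ contains at least a quarter of $B(x,r)$ once $r$ is small); and $f_-$ is bounded by $\lambda_1 := 1/\mathrm{Vol}(B(0,1))$. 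Since $\X_+ \cap \X_- = \emptyset$, the points of $X$ inside $\X_+$ are exactly the positive examples, so $Q_{\mathrm{opt}} = k$, where $k := |X \cap \X_+| \sim \mathrm{Bin}(n,p)$.

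\textbf{Conditional cost of the passive learner.} Fix a draw $X$ with $k \ge 1$. Querying the $n$ points in a uniformly random order (without replacement), $Q$ equals the latest position, in that order, occupied by a positive point; equivalently $Q = \max S$ for $S$ a uniformly random $k$-subset of $\{1,\dots,n\}$. From $\Pr[\max S \le m] = \binom{m}{k}/\binom{n}{k}$ and the hockey-stick identity $\sum_{m=0}^{n-1}\binom{m}{k} = \binom{n}{k+1}$ we get $\E[Q \mid X] = n - \frac{n-k}{k+1} = \frac{k(n+1)}{k+1}$, hence
\[
\E[C_{\mathrm{passive}} \mid X] \;=\; \frac{k(n+1)}{k+1} - k \;=\; \frac{k(n-k)}{k+1}.
\]
A short rearrangement shows $\frac{k(n-k)}{k+1} \ge \frac{n}{2}$ precisely when $k \ge 2$ and $n(k-1)\ge 2k^2$, i.e.\ for every integer $k$ in a window $[\,2,\, k^\star\,]$ with $k^\star = n/2 - O(1)$.

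\textbf{Finishing, and the main obstacle.} It remains to show $\Pr_X[\,k \in [2,k^\star]\,] \ge \tfrac12$. With $p = 1/4$ fixed, $\Pr[k \le 1]$ is exponentially small in $n$, and a Chernoff bound gives $\Pr[k > k^\star] \le \Pr[k > n/3] \to 0$ since $\E[k] = n/4 < n/3 \le k^\star$ for large $n$; so for all large $n$ the event $\{k \in [2,k^\star]\}$ — on which $\E[C_{\mathrm{passive}} \mid X] \ge n/2$ — has probability $1 - o(1) \ge \tfrac12$, which is the claim (small $n$ either fall outside the intended asymptotic reading or are checked directly). There is no deep difficulty here; the only points demanding care are (i) certifying that the toy distribution genuinely meets the boundary regularity of Assumption~\ref{a2} with honest constants near $\partial\X_+$, and (ii) pinning down the statement's meaning — I read ``with probability $\ge 1/2$, $\E[C_{\mathrm{passive}}] \ge n/2$'' as: with $X$-probability at least $1/2$, the expectation of $C_{\mathrm{passive}}$ over the algorithm's internal randomness, given $X$, is at least $n/2$ — after which the Binomial concentration above is routine.
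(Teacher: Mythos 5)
Your proof is correct and uses the same basic construction as the paper (disjoint positive and negative supports, so the passive learner's excess cost is driven by how late the last positive appears in a uniformly random query order), but you quantify it by a genuinely different decomposition. The paper puts the probability on the ordering itself: with probability $\tfrac12$ the final queried point is positive, forcing $Q=n$, and then reads off an excess of roughly $n/2$; this is a one-line symmetry argument but is slightly loose (conditioned on that event the expected number of positives is $(n+1)/2$, so the excess is really $(n-1)/2$), and it leaves the ``with probability $\ge \tfrac12$, $\E[C_{\text{passive}}]\ge n/2$'' phrasing ambiguous. You instead condition on the dataset, compute the exact order statistic $\E[Q\mid X]=k(n+1)/(k+1)$ for $k=|X\cap\X_+|$, and push the probability onto the binomial event $k\in[2,k^\star]$; this resolves the expectation/probability split cleanly and is fully rigorous, at the cost of needing $n$ large for the Chernoff step (the paper's version has no such restriction, though the theorem statement is silent on this either way). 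Your extra care in verifying Assumption~\ref{a2} at the boundary of the ball is also something the paper simply waves at (``e.g.\ a mixture of uniform density functions satisfies these assumptions''). Both arguments establish the theorem; yours buys rigor and an explicit constant check, the paper's buys brevity.
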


\begin{table}[]
\centering
\begin{tabular}{lll}
 Algorithm & Excess Query Cost  \\
\hline
Passive & $\Theta(n)$   \\
Offline & $\widetilde{\Theta}(n^{D/(D+1)})$  \\
Active & $\widetilde{O}(n^{(D-1)/D})$  \\
\hline
\end{tabular}\caption{{\bf Summary of algorithms and results}. }
\end{table}

\section{Offline Learner}\label{sec:offline}

\begin{algorithm}[t]
   \caption{Offline Learner}
   \label{alg:offline}
\begin{algorithmic}
   \STATE {\bf Inputs}: Dataset $X$, initial sample size $m$.
   \STATE Let $X_0$ be $m$ examples sampled uniformly without replacement from $X$.
   \STATE Label query $X_0$ and let $X_{0,+}$ be the positive examples.
   \STATE Label query remaining examples in ascending order of minimum distance to $X_{0,+}$ (i.e. $x \rightarrow \min_{x' \in X_{0,+}} |x - x'|$) until all positive examples are label queried.
\end{algorithmic}
\end{algorithm}

The offline learner (Algorithm~\ref{alg:offline}) will first randomly sample $m$ points to label query and then label queries the remaining examples in ascending order of minimum distance to any of the initially sampled positive examples until all positives are labeled. It's worth noting that we may not know when all of the positive examples are labeled-- thus, in practice, we can terminate the algorithm when enough positives are found depending on the task or when the labeling budget runs out.

The offline learner is based on a classical technique in the support estimation literature \cite{cuevas1997plug}, where the support of a distribution can be covered with a finite sample of $m$ points drawn from this distribution by taking the $\epsilon$-neighborhood of the sample for appropriate $\epsilon$. We apply this methodology to only the positive examples in our initial sample of $m$ points. We show that the Algorithm~\ref{alg:offline} finishes when it label queries everything within $\epsilon \approx (\log(m) / m )^{1/D}$ of the initial positive examples, and thus it will cover all the examples in $\X_+$ and the excess cost will be proportional to $\epsilon \cdot n$ (all details in the Appendix).

The formal guarantee for the excess query cost is a follows:
\begin{theorem} [Excess Query Cost for Offline Learner]\label{theorem:offline} Suppose that Assumptions~\ref{a1},~\ref{a2}, and~\ref{a3} hold. Let $0 < \delta < 1$. There exists $C, M_0 > 0$ depending on $\mathcal{P}$ such that the following holds. In Algorithm~\ref{alg:offline}, suppose that $m$ is chosen sufficiently large such that $m \ge \frac{2\log(2/\delta)}{p^2}$ and $\frac{m}{\log m} \ge \log(4/\delta)\cdot M_0$.
Then, with probability at least $1-\delta$, Algorithm~\ref{alg:offline} has an expected excess query cost of:
\begin{align*}
    &\E[C_{\text{offline}}] \\
    &\le (1-p)\cdot m + C \cdot \left(\frac{\log(4/\delta) \cdot \log(p\cdot m/2)}{m}\right)^{1/D} \cdot n.
\end{align*}
\end{theorem}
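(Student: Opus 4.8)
The plan is to condition on a high‑probability ``good event'' under which Algorithm~\ref{alg:offline} provably terminates before ever querying a point more than $\epsilon$ away from the initial positives $X_{0,+}$, where $\epsilon := \big(c_1\log(4/\delta)\log(pm/2)/m\big)^{1/D}$ is a deterministic scale (with $c_1$ and $M_0$ chosen, using the hypothesis $m/\log m \ge \log(4/\delta)\cdot M_0$, so that $\epsilon/4 < r_0$ and all logarithmic slack below is absorbed). Introduce two events. Event $A$: the initial sample contains at least $pm/2$ positives; since each of the $m$ sampled points is positive independently with probability $p$, a Hoeffding bound gives $\mathbb{P}(A^c)\le \exp(-mp^2/2)\le \delta/2$ once $m\ge 2\log(2/\delta)/p^2$. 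Event $B$ (covering): every point of $\X_+$ lies within distance $\epsilon$ of some point of $X_{0,+}$.

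Conditioned on $A$, the points of $X_{0,+}$ are at least $pm/2$ i.i.d. draws from $\mathcal{P}_+$ (subsampling i.i.d. data and conditioning on the count preserves this), so $B$ follows from the classical support‑covering argument. Take a maximal $(\epsilon/2)$‑packing $w_1,\dots,w_N$ of $\X_+$ (hence an $(\epsilon/2)$‑net). The balls $B(w_i,\epsilon/4)$ are disjoint and, by Assumption~\ref{a2}, each contributes volume $\ge C_+\mathrm{Vol}(B(w_i,\epsilon/4))$ to $\X_+$, so $N\le N_0\,\epsilon^{-D}$ for a constant $N_0=N_0(\mathcal{P})$; likewise $\mathcal{P}_+(B(w_i,\epsilon/2))\ge \lambda_0 C_+ v_D(\epsilon/2)^D$ with $v_D$ the unit‑ball volume. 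If every $B(w_i,\epsilon/2)$ contains some point of $X_{0,+}$, the triangle inequality gives $\X_+\subseteq\bigcup_j B(X_{0,+,j},\epsilon)$, so a union bound yields $\mathbb{P}(B^c\mid A)\le N_0\epsilon^{-D}\exp\!\big(-\tfrac{pm}{2}\lambda_0 C_+ v_D(\epsilon/2)^D\big)\le \delta/2$ for the stated $\epsilon$. Hence $\mathbb{P}(A\cap B)\ge 1-\delta$.

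On $A\cap B$ I would establish the deterministic query bound. Every positive example lies in $\X_+$, hence within $\epsilon$ of $X_{0,+}$; so once the algorithm has processed, in ascending distance order, all points within distance $\epsilon$ of $X_{0,+}$, every positive has been queried and it stops. Thus $Q \le m + |\{x\in X\setminus X_0:\min_{x'\in X_{0,+}}|x-x'|\le\epsilon\}|$. Since $X_{0,+}\subseteq\X_+$, that set is contained in $(X\setminus X_0)\cap\X_+^\epsilon$ where $\X_+^\epsilon:=\{x:d(x,\X_+)\le\epsilon\}$; splitting into the part inside $\X_+$ and the collar, and subtracting the $|X_0\cap\X_+|\ge|X_{0,+}|$ initial points that the optimal learner also pays for, gives $Q\le Q_{\mathrm{opt}} + |X_0\setminus\X_+| + |X\cap(\X_+^\epsilon\setminus\X_+)|$, i.e. $C_{\mathrm{offline}}\le \widehat{C}:= |X_0\setminus\X_+| + |X\cap(\X_+^\epsilon\setminus\X_+)|$ on $A\cap B$.

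Finally, take expectations of $\widehat C$, which is well defined irrespective of $A,B$ because $\epsilon$ is deterministic: $\E|X_0\setminus\X_+| = m\,\mathcal{P}(\X_+^c) = m(1-p)(1-\mathcal{P}_-(\X_+))\le (1-p)m$, and since $\mathcal{P}_+$ puts no mass on the collar (disjoint from its support) while $f_-\le\lambda_1$ by Assumption~\ref{a3}, $\E|X\cap(\X_+^\epsilon\setminus\X_+)| = n(1-p)\mathcal{P}_-(\X_+^\epsilon\setminus\X_+)\le n(1-p)\lambda_1\,\mathrm{Vol}(\X_+^\epsilon\setminus\X_+)$. Plugging in $\mathrm{Vol}(\X_+^\epsilon\setminus\X_+)=O(\epsilon)$ and the definition of $\epsilon$, and absorbing all $\mathcal{P}$‑dependent constants into $C$, gives the claimed bound. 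I expect the two substantive points to be: (i) pinning down the covering probability at scale $\epsilon^D\asymp (\log(pm/2)+\log(1/\delta))/m$ with constants that are genuinely functions of $\mathcal{P}$ only (so that the $M_0$ in the statement is legitimate); and (ii) the geometric estimate $\mathrm{Vol}(\X_+^\epsilon\setminus\X_+)=O(\epsilon)$ — a crude packing argument only yields $\mathrm{Vol}(\X_+^\epsilon)=O(1)$, so the linear‑in‑$\epsilon$ collar bound is where the regularity of $\X_+$ (compactness, finitely many components, and the boundary/curvature behavior encoded by Assumption~\ref{a2}) must really be used, and this is the main obstacle.
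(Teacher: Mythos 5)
Your proposal is correct and follows the same overall architecture as the paper's proof: a Hoeffding bound guaranteeing $\ge pm/2$ initial positives, an $\epsilon$-covering of $\X_+$ by $X_{0,+}$ at scale $\epsilon^D \asymp \log(pm/2)\log(1/\delta)/m$, termination of the algorithm once all points within $\epsilon$ of $X_{0,+}$ are processed, and finally $\E[C_{\text{offline}}] \le (1-p)m + n(1-p)\lambda_1\,\mathrm{Vol}(B(\X_+,\epsilon)\setminus\X_+)$ with the collar volume bounded by $O(\epsilon)$. You differ in one sub-step: for the covering event you use an explicit maximal $(\epsilon/2)$-packing of $\X_+$ (with the packing number controlled via the inner-regularity condition of Assumption~\ref{a2}) and a union bound over the $O(\epsilon^{-D})$ net points, whereas the paper invokes a uniform-over-all-balls concentration result (Lemma~\ref{lemma:ball_bound}, from Chaudhuri and Dasgupta). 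Your route is more elementary and self-contained; the paper's is shorter and gives the uniform statement for free. The one step you flag as "the main obstacle" --- that $\mathrm{Vol}(\X_+^\epsilon\setminus\X_+) = O(\epsilon)$ rather than merely $O(1)$ --- is indeed the crux, and you are right that packing arguments alone cannot deliver it; but the paper does not derive it from Assumption~\ref{a2} either: it isolates it as Lemma~\ref{lemma:volume} and proves it by citing Gorin's tube-volume formula for compact smooth manifolds (taking $k=0$ so the leading term is $\mathrm{Vol}(\X_+)$ and the correction is $O(\epsilon)$). So your proposal has no gap relative to the paper's own standard; you would simply need to import that lemma (or an equivalent smoothness/rectifiability hypothesis on $\partial\X_+$) to close the argument.
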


We now have the following immediate result by optimizing $m$ as a function of $n$, trading off the cost from the initial exploration (first term) and the cost from the exploitation (second term):
\begin{corollary}Under the conditions of Theorem~\ref{theorem:offline}, setting $m \approx n^{D/(D+1)}$ results in expected excess query cost bounded as follows:
\begin{align*}
    \E[C_{\text{offline}}] \le \text{PolyLog}(n, 1/\delta) \cdot n^{D/(D+1)}.
\end{align*}
\end{corollary}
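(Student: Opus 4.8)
The plan is to obtain the Corollary as a direct consequence of Theorem~\ref{theorem:offline} by balancing the two terms in its upper bound. Write that bound as $\E[C_{\text{offline}}] \le (1-p)\,m + C\,(L/m)^{1/D} n$, where $L := \log(4/\delta)\cdot\log(p\,m/2)$ collects the logarithmic factors. Ignoring $L$ momentarily, the first term is increasing in $m$ and the second is decreasing in $m$; differentiating $m \mapsto m + m^{-1/D} n$ shows the sum is minimized, up to a constant depending only on $D$, at $m = \Theta(n^{D/(D+1)})$, and at that value each of the two terms is $\Theta(n^{D/(D+1)})$. Accordingly I would set $m := \lceil n^{D/(D+1)} \rceil$ (the precise multiplicative constant is immaterial, since we are only claiming a bound up to polylogarithmic factors).

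Next I would verify that this choice of $m$ satisfies the two hypotheses of Theorem~\ref{theorem:offline}. Both $m \ge 2\log(2/\delta)/p^2$ and $m/\log m \ge \log(4/\delta)\cdot M_0$ have the form ``$m$, respectively $m/\log m$, exceeds a constant depending only on $\mathcal{P}$ and $\delta$''; since $n^{D/(D+1)}/\log\!\big(n^{D/(D+1)}\big) \to \infty$ as $n \to \infty$, both are satisfied once $n$ exceeds some threshold $n_0 = n_0(\mathcal{P}, \delta)$. For $n \le n_0$ the asserted bound holds trivially by making the $\text{PolyLog}$ constant large enough. (This is also where one records the minor bookkeeping: the integrality of $m$, and the fact that the constant hidden in ``$\approx$'' may depend on $D$, $p$, and $\mathcal{P}$.)

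Finally I would substitute $m = \lceil n^{D/(D+1)} \rceil$ into the bound of Theorem~\ref{theorem:offline}. The first term is $(1-p)\,m = O\!\big(n^{D/(D+1)}\big)$. For the second term, $L = \log(4/\delta)\cdot\log\!\big(p\lceil n^{D/(D+1)}\rceil/2\big) = O\!\big(\log(1/\delta)\,\log n\big)$, so $(L/m)^{1/D}\,n = O\!\big((\log(1/\delta)\log n)^{1/D}\big)\cdot m^{-1/D}\,n$, and $m^{-1/D}\,n = \Theta\!\big(n^{-1/(D+1)}\cdot n\big) = \Theta\!\big(n^{D/(D+1)}\big)$; hence the second term is $O\!\big((\log(1/\delta)\log n)^{1/D}\,n^{D/(D+1)}\big)$. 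Summing the two contributions and absorbing every logarithmic factor into a single $\text{PolyLog}(n, 1/\delta)$ term yields $\E[C_{\text{offline}}] \le \text{PolyLog}(n, 1/\delta)\cdot n^{D/(D+1)}$, as claimed.

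\textbf{Main obstacle.} There is essentially no obstacle: all of the probabilistic and geometric content is already in Theorem~\ref{theorem:offline}, and the Corollary is purely the optimization of the free parameter $m$. The only points requiring a little care are checking that the hypotheses on $m$ are eventually met and keeping track of which constants (swept into ``$\text{PolyLog}$'' and ``$\approx$'') are allowed to depend on $D$, $p$, and $\mathcal{P}$.
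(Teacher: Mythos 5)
Your proposal is correct and follows exactly the route the paper intends: the corollary is obtained by substituting $m = \Theta(n^{D/(D+1)})$ into the bound of Theorem~\ref{theorem:offline} to balance the exploration term $(1-p)m$ against the exploitation term $(L/m)^{1/D}n$, with all logarithmic factors absorbed into the $\text{PolyLog}$. Your additional bookkeeping (checking the hypotheses on $m$ are met for large $n$, integrality, constant dependence) is more careful than the paper, which simply declares the result immediate.
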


We also provide the following lower bound, which shows that the offline learner cannot achieve a better rate (proof found in the appendix).
\begin{theorem}[Lower Bound for Offline Learner]\label{theorem:offline_lower}
There exists a distribution satisfying Assumptions~\ref{a1}, ~\ref{a2}, and~\ref{a3} such that with probability at least $\frac{1}{4}$, we have for $n$ sufficiently large and some constant $C > 0$:
\begin{align*}
\E[C_{\text{offline}}] \ge (1-p)\cdot m + C\cdot \left(\frac{\log m}{m}\right)^{1/D}\cdot n.
\end{align*}
\end{theorem}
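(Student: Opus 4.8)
The plan is to exhibit one hard instance and, on a good event, pin down exactly which points Algorithm~\ref{alg:offline} is forced to query. I would take $p$ as given, $\X_+ = [0,1]^D$ with $f_+\equiv 1$, and $\X_- = [-1,0]\times[0,1]^{D-1}$ with $f_-\equiv 1$, so that $\mathcal{P} = p\,\mathcal{P}_+ + (1-p)\,\mathcal{P}_-$. One checks that Assumption~\ref{a1} is immediate, Assumption~\ref{a2} holds with $\lambda_0 = 1$, $r_0 = 1/2$, $C_+ = 2^{-D}$ (worst case at a corner of the cube), and Assumption~\ref{a3} holds with $\lambda_1 = 1$; the two supports meet only in a Lebesgue-null set, so ``$x$ is a positive data point'' and ``$x\in\X_+$'' coincide almost surely. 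Throughout, ``$n$ sufficiently large'' means $n$ large relative to $m$ (as in the regime $m\asymp n^{D/(D+1)}$ of the corollary above).

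\emph{Step 1: split the excess cost into an initial-sample term and an over-reach term.} Write $d_0(x) := \min_{x'\in X_{0,+}}|x-x'|$ and let $d^\star := \max\{d_0(x) : x\in X\setminus X_0,\ x\in\X_+\}$ be the radius at which the last positive gets labeled (well defined, since with high probability $pn>m$ so not every positive lies in $X_0$). Since the learner processes points in increasing order of $d_0(\cdot)$ and must label that last positive, by termination it has queried $X_0$ together with all of $\{x\in X : d_0(x)\le d^\star\}$, and every positive data point outside $X_0$ is in the latter set by definition of $d^\star$. Subtracting $Q_{\text{opt}} = |\{x\in X : x\in\X_+\}|$ and writing $m_- = |X_0\cap\X_-|$, the ``positive and in $\X_+$'' contributions cancel, leaving
\begin{align*}
C_{\text{offline}} \;\ge\; m_- \;+\; \bigl|\{\, x\in X\setminus X_0 : x\in\X_- \text{ and } d_0(x)\le d^\star \,\}\bigr|.
\end{align*}
Since $\E[m_-] = (1-p)\,m$ exactly, this supplies the first term, and it remains to bound the second term below by $\Omega\big(n\,(\log m/m)^{1/D}\big)$ on an event of probability bounded below.

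\emph{Step 2: a forced radius, then a width-$\rho$ slab of negatives.} It is convenient to condition on $X_0$, after which $X\setminus X_0$ is a fresh i.i.d.\ sample from $\mathcal{P}$ independent of $X_{0,+}$. Set $\rho := c\,(\log m/m)^{1/D}$ with $c$ a small absolute constant, and (via a routine coupling) treat $X_{0,+}$ as $\Theta(pm)$ i.i.d.\ uniform points in $[0,1]^D$. First, I would invoke the standard covering-radius/occupancy lower bound to get, with probability bounded below, an interior ball $B(z,2\rho)\subseteq[0,1]^D$ containing no point of $X_{0,+}$; since $\mathcal{P}_+(B(z,\rho)) = \mathrm{Vol}(B(z,\rho)) = \Omega(\rho^D)$ and $(n-m)\rho^D\to\infty$, the fresh sample lands a positive point inside $B(z,\rho)$, which witnesses $d^\star\ge\rho$. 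Second, the crux, I would show the $\rho$-neighborhood of $X_{0,+}$ meets $\X_-$ in volume $\Omega(\rho)$, not merely $\Omega(\rho^D)$: about $\Theta(pm\rho)$ points of $X_{0,+}$ have first coordinate $\le\rho/4$, their projections to $[0,1]^{D-1}$ behave like that many uniform points, and because $(pm\rho)\cdot\rho^{D-1} = \Theta(p\,c^D\log m)\to\infty$ for \emph{every} fixed $p,c$, a second occupancy estimate shows these projections $\tfrac{\rho}{2}$-cover a region $U\subseteq[1/4,3/4]^{D-1}$ of volume $\Omega(1)$, again with probability bounded below. For $\bar y\in U$ with matching near-boundary point $x'$ (so $x'_1\le\rho/4$), any $y=(y_1,\bar y)$ with $y_1\in[-\rho/4,0]$ has $|x'-y|^2 \le (\rho/2)^2+(\rho/2)^2<\rho^2$, so the slab $S := [-\rho/4,0]\times U\subseteq\X_-$ lies inside the $\rho$-neighborhood of $X_{0,+}$, with $\mathrm{Vol}(S) = \Omega(\rho)$.

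\emph{Step 3: conclude, and the main obstacle.} On the intersection of these constant-probability events we have $d^\star\ge\rho$ and $S$ inside the $\rho$-neighborhood of $X_{0,+}$, so the second term of the display is at least $|\{x\in X\setminus X_0 : x\in S\}|$; as $\mathcal{P}(S) = (1-p)\,\mathrm{Vol}(S) = \Omega(\rho)$ and $(n-m)\rho\to\infty$, a Chernoff bound makes this $\Omega(n\rho) = \Omega\big(n\,(\log m/m)^{1/D}\big)$ with probability bounded below. Fixing the constants so the combined good event has probability at least $\tfrac14$, and taking expectations in $m_-$ (or keeping the second-term bound on that event), then yields $\E[C_{\text{offline}}] \ge (1-p)\cdot m + C\cdot\left(\frac{\log m}{m}\right)^{1/D}\cdot n$ for a suitable $C>0$ and all sufficiently large $n$. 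The hard part is Step 2's slab estimate together with the compatibility of the two roles of $\rho$: I need $\rho$ small enough that a $2\rho$-ball is genuinely missed by $X_{0,+}$ (which caps $c$ by a universal constant coming from the covering-radius bound) while still having the $\Theta(pm\rho)$ boundary-adjacent points $\rho$-cover an $\Omega(1)$ fraction of the face, and I must verify the latter for \emph{every} fixed $p\in(0,1)$, which works because the governing occupancy exponent $\Theta(p\,c^D\log m)$ diverges for any fixed $p,c$, so no lower threshold on $p$ is forced. The i.i.d.\ couplings (for $X_{0,+}$, for its near-boundary subsample, and the decoupling of $X\setminus X_0$ from $X_0$) and the handling of the random count $|X_{0,+}|$ are routine but are precisely what makes the occupancy and Chernoff steps rigorous.
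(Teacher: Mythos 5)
Your proposal is correct and shares the paper's skeleton --- a uniform-on-cubes hard instance and the decomposition of the excess cost into the $(1-p)m$ initial-sample term plus the count of negatives swept up before the last positive --- but it differs in two substantive ways. First, where the paper obtains the forced radius by citing Theorem 3b of \citet{cuevas1997plug} (the Hausdorff-distance lower bound $d_H(X_{0,+},\X_+)\ge \frac14(\log m/m)^{1/D}$ with probability $\ge 1/4$), you derive the same $d^\star \ge \rho$ directly via an occupancy argument (an empty $2\rho$-ball in the interior plus a fresh positive landing in $B(z,\rho)$); this is more elementary and self-contained, at the cost of having to manage the couplings and the random count $|X_{0,+}|$ yourself. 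Second, and more valuably, your Step 2 slab argument supplies a link that the paper's proof leaves implicit: the paper jumps from ``the effective radius must exceed $\rho$'' to ``the excess cost is $\Omega(\rho n)$'' without verifying that the $\rho$-neighborhood of the \emph{sampled} positives $X_{0,+}$ (as opposed to $\X_+$ itself) actually reaches into $\X_-$ with volume $\Omega(\rho)$ rather than $\Omega(\rho^D)$. Your observation that the $\Theta(pm\rho)$ boundary-adjacent sample points $\tfrac{\rho}{2}$-cover an $\Omega(1)$ portion of a face (because the occupancy exponent $\Theta(p\,c^D\log m)$ diverges for every fixed $p$ and $c$), so that a slab $[-\rho/4,0]\times U$ of volume $\Omega(\rho)$ is forced into the queried set, is exactly the missing justification, and your Chernoff count then matches the paper's conclusion. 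The one cosmetic mismatch --- combining an in-expectation first term with an on-event second term --- is present in the paper's own statement and proof as well, so it is not a defect of your argument relative to theirs.
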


\section{Active Explore-then-Commit Learner}\label{sec:explore_commit}

We next show an active approach (Algorithm~\ref{alg:active_explore_commit}) inspired by Explore-then-Commit strategy \cite{garivier2016explore}  that proceeds by first exploring by randomly sampling a set of examples, and then commits to a greedy approach of choosing the closest unlabeled example to any positive example labeled thus far until all of the positive examples are labeled. 

\begin{algorithm}[t]
   \caption{Active Explore-then-Commit Learner}
   \label{alg:active_explore_commit}
\begin{algorithmic}
   \STATE {\bf Inputs}: Dataset $X$, initial sample size $m$.
   \STATE Let $X_0$ be $m$ examples sampled uniformly without replacement from $X$.
   \STATE Label query $X_0$ and let $X_{+, 0}$ be the positive examples.
   \STATE Initialize $X_p \leftarrow X_{+, 0}$ and $X_a \leftarrow X_0$
   \WHILE{not all positives examples in $X$ are labeled}
   \STATE Label query $x = \argmin_{x \in X \backslash X_a} d(x, X_p)$
   \IF{$x$ has a positive label}
   \STATE $X_p \leftarrow X_p \cup \{ x \}$.
   \ENDIF
   \STATE $X_a \leftarrow X_a \cup \{ x \}$.
   \ENDWHILE
\end{algorithmic}
\end{algorithm}

To analyze the algorithm there are three key steps: first we show that in the explore phase, we choose at least one example from each connected component (CC) of $\X_+$ (Lemma~\ref{lemma:one_from_each}). Next, we show that in any CC of $\X_+$, all of the positive examples are in the same connected component in the $\epsilon$-neighborhood graph for some $\epsilon$ specified later (Lemma~\ref{lemma:connectedness}). The final step is combining these two results  to show a bound on the excess query cost.

We now give the following result which says that for $m$ sufficiently large, depending on the probability mass distribution of the CCs of $\X_+$, we will with high probability have a positive example from each of the CCs in the initial sample.
\begin{lemma}\label{lemma:one_from_each}
Suppose Assumptions~\ref{a1} and~\ref{a2} hold and let $0 < \delta < 1$. Let the connected components of $\X_+$ be $\X_{+,1},...,\X_{+,c}$. Let $q := \min_{i \in [c]} \mathcal{P}_+(\X_{+, i})$. If
\begin{align*}
    m \ge \max\left\{ \frac{2\log(2c/\delta)}{p\cdot \log(1/(1-q))}, \frac{2\log(2/\delta)}{p^2} \right\},
\end{align*}
then with probability at least $1-\delta$, the initial $m$ examples will contain a positive example in each of $\X_{+,i}$ for $i \in [c]$.

\end{lemma}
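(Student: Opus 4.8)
The plan is to decompose the event ``some connected component is missed'' into two bad events and bound each separately by a union/Chernoff argument. First I would note that the number of positive examples among the initial $m$ draws, call it $N_+$, is $\mathrm{Binomial}(m, p)$, so by a standard multiplicative Chernoff bound $N_+ \ge pm/2$ with probability at least $1 - e^{-pm/8} \ge 1 - \delta/2$, which is guaranteed once $m \ge \frac{2\log(2/\delta)}{p^2}$ (indeed $e^{-pm/8} \le e^{-p \cdot 2\log(2/\delta)/(8p^2)} \le \delta/2$ using $p < 1$, with a little room to spare — I'd double-check the exact constant but it is the routine kind of check). Call this the ``enough positives'' event $E_1$.

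Second, I would condition on $E_1$ and argue about the component labels of those positives. Each positive example is, conditionally, an i.i.d.\ draw from $\mathcal{P}_+$, and Assumption~\ref{a2} (density lower bounded by $\lambda_0$ on $\X_+$, each component having nonvanishing volume) guarantees $\mathcal{P}_+(\X_{+,i}) \ge q > 0$ for every $i \in [c]$ — actually $q$ is just defined as this minimum, so I only need $q > 0$, which follows since $f_+ \ge \lambda_0$ and each $\X_{+,i}$ has positive Lebesgue measure. Now fix a component $\X_{+,i}$. The probability that a single positive draw avoids $\X_{+,i}$ is at most $1-q$, so the probability that all $\ge pm/2$ positives avoid it is at most $(1-q)^{pm/2}$. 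Union-bounding over the $c$ components, the probability that some component is missed (given $E_1$) is at most $c\,(1-q)^{pm/2}$. Setting this $\le \delta/2$ is exactly the requirement $pm/2 \ge \frac{\log(2c/\delta)}{\log(1/(1-q))}$, i.e.\ $m \ge \frac{2\log(2c/\delta)}{p\log(1/(1-q))}$, the first term in the stated max. Call the complement of this event $E_2$.

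Finally I would combine: on $E_1 \cap E_2$ the initial sample contains a positive example in each $\X_{+,i}$, and $\mathbb{P}(E_1^c) + \mathbb{P}(E_1 \cap E_2^c) \le \delta/2 + \delta/2 = \delta$, giving the claim. The only genuinely delicate point — and the one I'd spend the most care on — is making the conditioning in the second step rigorous: ``condition on $N_+ = k$ for each $k \ge pm/2$, the $k$ positive examples are i.i.d.\ $\mathcal{P}_+$, apply the component bound, then average over $k$.'' Everything else (the two Chernoff-type estimates and the union bound) is routine. A minor bookkeeping subtlety is that the lemma samples \emph{without} replacement, but since we are only proving an upper bound on a ``bad'' probability, the without-replacement sampling only helps (it makes the positive count and the component coverage concentrate at least as well as with replacement), so I would either invoke a standard comparison or simply remark that the with-replacement bounds dominate.
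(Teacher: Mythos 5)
Your decomposition is exactly the paper's: Hoeffding/Chernoff to get at least $pm/2$ positives using the second term of the max, then $(1-q)^{pm/2} \le \delta/(2c)$ per component using the first term, then a union bound. The one place your sketch does not check out is the concentration step: the multiplicative Chernoff bound $\Pr[N_+ \le pm/2] \le e^{-pm/8}$ requires $m \ge 8\log(2/\delta)/p$ to be at most $\delta/2$, and the stated hypothesis $m \ge 2\log(2/\delta)/p^2$ only implies this when $p \le 1/4$ — so the "little room to spare" is not there for larger $p$. The paper instead applies Hoeffding's additive inequality, $\Pr[\hat p \le p - t] \le e^{-2mt^2}$ with $t = \sqrt{\log(2/\delta)/(2m)}$, which under $m \ge 2\log(2/\delta)/p^2$ gives $t \le p/2$ and hence at least $pm/2$ positives with probability $1-\delta/2$, matching the stated constant exactly. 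Substituting that one inequality, the rest of your argument (including the conditioning on $N_+$ and the without-replacement remark, which the paper glosses over) is correct.
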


The next result shows that the positive examples in each CC of $\X_+$ appear in the same CC of the $\epsilon$-neighborhood graph of the positive example for appropriate choice of $\epsilon$. This will be important in showing that after greedily sampling enough examples after the explore phase, we will query all of the  examples in $\X_+$ but not query examples that are more than $\epsilon$ away from $\X_+$.
\begin{lemma}[Connectedness]\label{lemma:connectedness} Suppose Assumptions~\ref{a1} and~\ref{a2} hold.
Let $0 < \delta < 1$ and $\X_{+,1},...,\X_{+,c}$ be the connected components of $\X_+$. The following holds with probability at least $1 - \delta$.
For each $i \in [c]$, we have that $\X_{+, i} \cap X_+$ is connected in the $\epsilon$-neighborhood graph of $X_+$, where
\begin{align*}
    \epsilon = 3\left(\frac{C_0\cdot D \log(2/\delta)\cdot \log n}{p\cdot C_+\cdot \lambda_0\cdot v_D \cdot n}\right)^{1/D},
\end{align*}
and $n$ is sufficiently large so that $\epsilon \le r_0$.
\end{lemma}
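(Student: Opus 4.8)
The plan is to split the statement into a purely geometric \emph{chaining} step and a probabilistic \emph{covering} step. Call $X_+$ an \emph{$(\epsilon/3)$-cover} of $\X_+$ if every $x\in\X_+$ has some $z\in X_+$ with $|x-z|\le\epsilon/3$. I would first argue that, on the event that $X_+$ is an $(\epsilon/3)$-cover of $\X_+$ and that $\epsilon$ is smaller than $\rho_0:=\min_{i\neq j}\text{dist}(\X_{+,i},\X_{+,j})$, the conclusion holds. Note $\rho_0>0$ because $\X_+$ is compact with finitely many connected components, each of which is compact, so the components are pairwise positively separated. Fix a component $\X_{+,i}$ and $x,x'\in\X_{+,i}\cap X_+$. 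Since $\X_{+,i}$ is compact and connected it is well-chained, so there is a finite sequence $x=u_0,u_1,\dots,u_k=x'$ in $\X_{+,i}$ with $|u_\ell-u_{\ell+1}|\le\epsilon/3$. Using the cover property, choose $z_\ell\in X_+$ with $|u_\ell-z_\ell|\le\epsilon/3$ and $z_0=x$, $z_k=x'$; since $z_\ell\in\X_+$ and $\text{dist}(z_\ell,\X_{+,i})\le\epsilon/3<\rho_0$, necessarily $z_\ell\in\X_{+,i}$. Then $|z_\ell-z_{\ell+1}|\le|z_\ell-u_\ell|+|u_\ell-u_{\ell+1}|+|u_{\ell+1}-z_{\ell+1}|\le\epsilon$, so $z_0,\dots,z_k$ is a path in the $\epsilon$-neighborhood graph induced on $\X_{+,i}\cap X_+$ connecting $x$ to $x'$; hence that graph is connected.

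Second, I would bound the probability that $X_+$ fails to be an $(\epsilon/3)$-cover. Take a maximal $(\epsilon/6)$-packing $w_1,\dots,w_N$ of $\X_+$; it is automatically an $(\epsilon/6)$-net, and comparing volumes inside the $r_0$-enlargement of the compact set $\X_+$ gives $N\le C'\epsilon^{-D}$ for a constant $C'$ depending only on $\mathcal{P}$ and $D$. If each ball $B(w_j,\epsilon/6)$ contains a point of $X_+$, then every point of $\X_+$ is within $\epsilon/6+\epsilon/6=\epsilon/3$ of $X_+$, i.e.\ $X_+$ is an $(\epsilon/3)$-cover. For a fixed $j$, Assumption~\ref{a2} applied with radius $\epsilon/6\le r_0$ at the center $w_j\in\X_+$ gives $\text{Vol}(B(w_j,\epsilon/6)\cap\X_+)\ge C_+v_D(\epsilon/6)^D$, and $f_+\ge\lambda_0$ there, so $\mathcal{P}_+(B(w_j,\epsilon/6))\ge\lambda_0 C_+v_D(\epsilon/6)^D$ (where $v_D$ is the volume of the unit ball). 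Since $\mathcal{P}\ge p\,\mathcal{P}_+$, a single i.i.d.\ draw is a \emph{positive} point lying in $B(w_j,\epsilon/6)$ with probability at least $p\lambda_0 C_+v_D(\epsilon/6)^D$, so the chance that none of the $n$ draws does this is at most $(1-p\lambda_0 C_+v_D(\epsilon/6)^D)^n\le\exp(-np\lambda_0 C_+v_D(\epsilon/6)^D)$. A union bound over $j\in[N]$ bounds the failure probability by $N\exp(-np\lambda_0 C_+v_D(\epsilon/6)^D)$. Substituting $\epsilon=3\bigl(C_0D\log(2/\delta)\log n/(pC_+\lambda_0 v_D n)\bigr)^{1/D}$ turns the exponent into $2^{-D}C_0D\log(2/\delta)\log n$, and for a large enough absolute constant $C_0$ and $n$ large enough that also $\epsilon\le\min\{r_0,\rho_0\}$ and $\log N=O(\log n)$ is absorbed, this is at most $\delta$. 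Intersecting with the event from the first part completes the proof.

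I expect no conceptual obstacle; the only care points are bookkeeping. The first is justifying the "well-chained" property: in a compact connected metric space the relation "joined by a finite $\eta$-chain" has classes that are open (hence also closed), so there is a single class for every $\eta>0$ — this is a standard fact I would simply cite. The second, and the place the constants matter, is verifying that the union-bound overhead $\log N\lesssim D\log(1/\epsilon)\lesssim\log n$ is dominated by the $\log(2/\delta)\log n$ in the exponent once $C_0$ is chosen; this is exactly what the extra factor $3$ in the definition of $\epsilon$ (compared to $1$) buys us. The third is remembering to use $\epsilon<\rho_0$ so that a positive sample point near $\X_{+,i}$ cannot belong to another component — this is implicitly part of "$n$ sufficiently large."
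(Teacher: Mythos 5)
Your argument is correct and the geometric chaining half (well-chained compact connected components, $\epsilon/3$-chains, triangle inequality) is essentially identical to the paper's. Where you genuinely diverge is the probabilistic half: the paper invokes a \emph{uniform} high-probability guarantee over all balls (its Lemma~\ref{lemma:ball_bound}, a VC-type bound from \citet{chaudhuri2010rates}) stating that any ball with mass at least $C_0 D\log(2/\delta)\log n/n$ contains a sample point, and applies it directly to the balls $B(x_j,\epsilon/3)$ along the chain; you instead build an explicit maximal $\epsilon/6$-packing of $\X_+$ of size $N\lesssim \epsilon^{-D}$ and union-bound a per-ball binomial tail. Your route is more elementary and self-contained (it also sidesteps a small sloppiness in the paper, which applies its ball lemma for the full sample $X$ to conclude membership in $X_+$, whereas you directly lower-bound the probability of a draw being positive \emph{and} in the ball). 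What it costs you is a factor $2^{D}$ in the exponent (from shrinking the radius to $\epsilon/6$) against a union-bound overhead $\log N\approx\log n$: with the lemma's stated $\epsilon$ and a truly universal $C_0$, the inequality $2^{-D}C_0 D\log(2/\delta)\log n\ge \log N+\log(1/\delta)$ forces $C_0\gtrsim 2^D/D$, so your constant is dimension-dependent rather than absolute. This only perturbs the constant in $\epsilon$, not the $(\log n/n)^{1/D}$ rate, and is harmless for the downstream theorem, but it should be stated honestly rather than attributed to "a large enough absolute constant." Your extra care with the component separation $\rho_0$ is fine but unnecessary: the lemma only asks that $\X_{+,i}\cap X_+$ lie in one connected component of the $\epsilon$-neighborhood graph of $X_+$, so the intermediate sample points need not belong to $\X_{+,i}$.
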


We now combine the two results to obtain an excess query cost guarantee for the active learner. Lemma~\ref{lemma:one_from_each} ensures that our initial sample of $m$ examples contains an example from each CC of $\X_+$ and Lemma~\ref{lemma:connectedness} ensures that when we actively sample in a greedy manner, we eventually query all of the positive examples and never query any example that is too far from $\X_+$-- this farness determines how much the active algorithm samples outside of $\X_+$ and hence determines the expected excess query cost.
\begin{theorem}[Excess query cost for Algorithm~\ref{alg:active_explore_commit}]\label{theorem:active_explore_commit}
Suppose Assumptions~\ref{a1},~\ref{a2}, and~\ref{a3} hold and let $0 < \delta < 1$. Let the connected components of $\X_+$ be $\X_{+,1},...,\X_{+,c}$ and $q := \min_{i \in [c]} \mathcal{P}_+(\X_{+, i})$. There exists constants $C, N_0 > 0$ depending on $\mathcal{P}$ such that the following holds. If
\begin{align*}
    m &\ge \max\left\{ \frac{2\log(4c/\delta)}{p\cdot \log(1/(1-q))}, \frac{2\log(4/\delta)}{p^2} \right\},
\end{align*}
and $\frac{n}{\log n} \ge N_0 \log(4/\delta)$,
then with probability at least $1-\delta$, we have the following excess query cost guarantee for Algorithm~\ref{alg:active_explore_commit}:
\begin{align*}
    \E[C_{\text{exp-commit}}] \le  m + C \cdot \left((\log(4/\delta)\cdot \log n\right)^{1/D} \cdot n^{(D-1)/D}.
\end{align*}
\end{theorem}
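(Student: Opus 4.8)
The plan is to combine Lemmas~\ref{lemma:one_from_each} and~\ref{lemma:connectedness} with a deterministic analysis of the commit phase. First I would apply each lemma with confidence $\delta/2$ — which is exactly why the hypothesis on $m$ carries $4c/\delta$ and $4/\delta$ and the conclusion carries $\log(4/\delta)$ — and union-bound, so that with probability at least $1-\delta$ both of the following hold: (i) the explore sample $X_0$ contains a positive point in every connected component $\X_{+,i}$ of $\X_+$; and (ii) for every $i$, the positive sample points $\X_{+,i}\cap X_+$ form a connected subgraph of the $\epsilon$-neighborhood graph of $X_+$, with $\epsilon$ as in Lemma~\ref{lemma:connectedness}. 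I condition on this event for the rest of the argument.

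The deterministic core is the claim that, on this event, the commit loop terminates having queried every positive point of $X$ and that every point it queries lies within distance $\epsilon$ of $X_+$. For the latter: at any iteration at which an unlabeled positive point remains, pick one, say $x^\star$, in some component $\X_{+,i}$; by (i) a positive point of $\X_{+,i}$ entered $X_p$ during exploration and is still there, and by (ii) there is a path in $\X_{+,i}\cap X_+$ from that point to $x^\star$ with consecutive vertices within $\epsilon$; the vertex right after the last one lying in $X_p$ is then an unlabeled positive point within $\epsilon$ of $X_p$, so $\min_{x\in X\setminus X_a}d(x,X_p)\le\epsilon$ and the queried point is within $\epsilon$ of $X_p\subseteq X_+$. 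Since every iteration removes a point from the finite set $X$ and the loop exits only when no unlabeled positive remains, it terminates having labelled all positives; hence, writing $X_a$ for the final queried set, $X_a\setminus X_0\subseteq\{x\in X: d(x,X_+)\le\epsilon\}$.

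Next I would convert this into a bound on $C=|X_a|-Q_{\text{opt}}$ with $Q_{\text{opt}}=|X\cap\X_+|$. Splitting $\{x\in X: d(x,X_+)\le\epsilon\}$ by membership in $\X_+$ and using $X_+\subseteq\X_+$ (so $d(\cdot,\X_+)\le d(\cdot,X_+)$), a short set-counting computation gives, on the good event,
\begin{align*}
C\ \le\ |X_0\setminus\X_+|\ +\ \big|\{x\in X: d(x,\X_+)\le\epsilon,\ x\notin\X_+\}\big|.
\end{align*}
Taking expectations (conditioning on the good event only inflates them by the harmless factor $(1-\delta)^{-1}$): the first term contributes at most $(1-p)m\le m$, and the second equals $n\cdot\mathcal{P}(\X_+^{\oplus\epsilon}\setminus\X_+)$, where $\X_+^{\oplus\epsilon}$ is the $\epsilon$-dilation of $\X_+$. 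Since $\mathcal{P}=p\mathcal{P}_++(1-p)\mathcal{P}_-$ and $\mathcal{P}_+$ is supported on $\X_+$, only the negative part survives, and Assumption~\ref{a3} bounds it by $(1-p)\lambda_1\cdot\text{Vol}(\X_+^{\oplus\epsilon}\setminus\X_+)$.

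The remaining step — which I expect to be the main obstacle — is the purely geometric estimate $\text{Vol}(\X_+^{\oplus\epsilon}\setminus\X_+)=O(\epsilon)$ for small $\epsilon$, i.e.\ that $\partial\X_+$ has finite upper Minkowski content. This is exactly where the regularity of $\X_+$ is needed: compactness, the decomposition into finitely many connected components, and the non-degeneracy condition in Assumption~\ref{a2} (which rules out thin spikes and cusps) let one cover the $\epsilon$-shell by $O(\epsilon^{-(D-1)})$ balls of radius $O(\epsilon)$; it is the same geometric fact that drives the offline bound of Theorem~\ref{theorem:offline}. Granting it, with $\epsilon=3\big(C_0 D\log(2/\delta)\log n/(p C_+\lambda_0 v_D n)\big)^{1/D}$ we get $n\epsilon=O\big((\log(4/\delta)\log n)^{1/D}n^{(D-1)/D}\big)$, and the hypothesis $n/\log n\ge N_0\log(4/\delta)$ is precisely what ensures $\epsilon\le r_0$ so that Lemma~\ref{lemma:connectedness} and the Minkowski estimate both apply. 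Adding the two contributions gives $\E[C_{\text{exp-commit}}]\le m+C\big(\log(4/\delta)\log n\big)^{1/D}n^{(D-1)/D}$.
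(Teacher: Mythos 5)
Your proposal is correct and follows essentially the same route as the paper: union-bounding Lemma~\ref{lemma:one_from_each} and Lemma~\ref{lemma:connectedness} at level $\delta/2$ each, a deterministic argument that the greedy commit phase terminates having labeled all positives while never querying a point farther than $\epsilon$ from $\X_+$, and then bounding the expected number of negatives in the $\epsilon$-shell. The one step you flag as the main obstacle, $\text{Vol}(B(\X_+,\epsilon)\setminus\X_+)=O(\epsilon)$, is exactly the paper's Lemma~\ref{lemma:volume}, which it establishes via a tubular-neighborhood volume formula (Gorin) rather than your covering/Minkowski-content sketch.
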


\begin{remark}
Our requirement for $m$ is tight w.r.t. $q$ and $c$ in the case where $q = \frac{1}{c}$ (i.e. equal probability of each CC). In this case, it reduces down to the classic coupon collector problem \cite{boneh1997coupon}: each CC is a coupon and the expected number of times we draw a coupon with replacement until we receive one example from each is $\Omega(c \log c) = \Omega(\log c / \log(1/(1-q)))$ by Taylor expansion of $\log(1/(1-q))$. 
\end{remark}

\begin{remark}
While our results all have a strong dependence on the dimension (commonly referred to as the curse of dimensionality), it's been shown that non-parametric techniques such as these algorithms can automatically adapt to the intrinsic dimension of the data and the convergence rates can be shown to depend only on this dimension and not the ambient dimension (i.e. arguments from \citet{pelletier2005kernel,kpotufe2011k,jiang2017density,jiang2019non} can be adapted here).
\end{remark}

\section{Related Works}

The problem of actively retrieving the positive labeled examples was studied as {\it active search} by \citet{garnett2012bayesian}, where the goal is to label query as many positive examples given a fixed budget and they propose a  sequential Bayesian approach that optimizes the expected number of positive labels across timesteps; however their method is computationally expensive requiring $O((2\cdot n)^\ell)$ runtime where $\ell$ is the number of lookahead timesteps to optimize over. Efficient approximations to this active search technique have been proposed \cite{jiang2018efficient,jiang2019cost}. In our theoretical setting, the goal is to label query all of the positive examples with as few label queries as possible rather than having a fixed known labeling budget. 

A recent work by \citet{jain2019new} designs an algorithm to actively identify the largest number of positive examples while minimizing or constraining the false-discovery rate (i.e. false-negative rate). 
They propose a bandit-style active elimination procedure which strategically label queries datapoints (i.e. each datapoint can be seen as an arm and label querying can be seen as pulling the arm) to find the best classification. In our work, we leverage the structure of the data while \citet{jain2019new} considers each datapoint as its own bandit arm and doesn't explicitly use information about the location of these datapoints. The contributions of \cite{jain2019new} are primarily in the theoretical analysis of the setting and proposed algorithm, while the practicality of the algorithm itself is limited due to its computation complexity.

A related line of work is learning under one-sided feedback, first studied under the name {\it apple tasting} by \citet{helmbold2000apple}, where the learner receives the true labels for only examples it predicted positively on and the goal is to have as high accuracy as possible. Recently, \citet{jiang2020learning} studied the one-sided feedback problem for generalized linear models and attain regret guarantees using an adaptive UCB-based approach under their proposed one-sided loss. This problem is similar to our proposed active covering problem in that both cases we desire to label query the positive examples; however, a key difference is that both \citet{helmbold2000apple} and \citet{jiang2020learning} operate in the {\it streaming} setting, where predictions must be made in real-time whereas here, the learner has access to the entire corpus of unlabeled data.

It's also worth mentioning the tangentially related set cover problem \cite{slavik1997tight}, where the goal is identify the smallest sub-collection of sets whose union equals the whole. The submodular set cover problem \cite{iwata2009submodular} involves finding a set that minimizes a modular cost function subject to a submodular function constraint. \citet{guillory2010interactive} propose an active approach to solve the submodular set cover problem. Active covering however is a different problem that tries to recover the set of datapoints rather than a collection of subsets.

Our work is also related to the support estimation literature, which has a long history. Some works include \citet{geffroy1964probleme,devroye1980detection,korostelev1993estimation,cuevas1997plug,biau2008exact}. Our offline algorithm applies the classical support estimator on the positive samples found to find a covering for $\X_+$, which is the union of the $\epsilon$-balls around the initial positive examples. Those works established both upper and lower bounds on $\epsilon$ of order $(\log m/m)^{1/D}$, which were key to the analysis for the offline algorithm.

More broadly, support estimation has also been studied under the name of one-class classification, where the goal is identify the examples of a particular class given only training examples from that class. There have been a wide range of approaches proposed including using SVMs \cite{scholkopf2001estimating,manevitz2001one}, density functions \cite{hempstalk2008one}, clustering \cite{ypma1998support}, optimization \cite{crammer2004needle}, and deep learning \cite{ruff2018deep}.


\begin{figure*}[!ht]
\includegraphics[width=\linewidth]{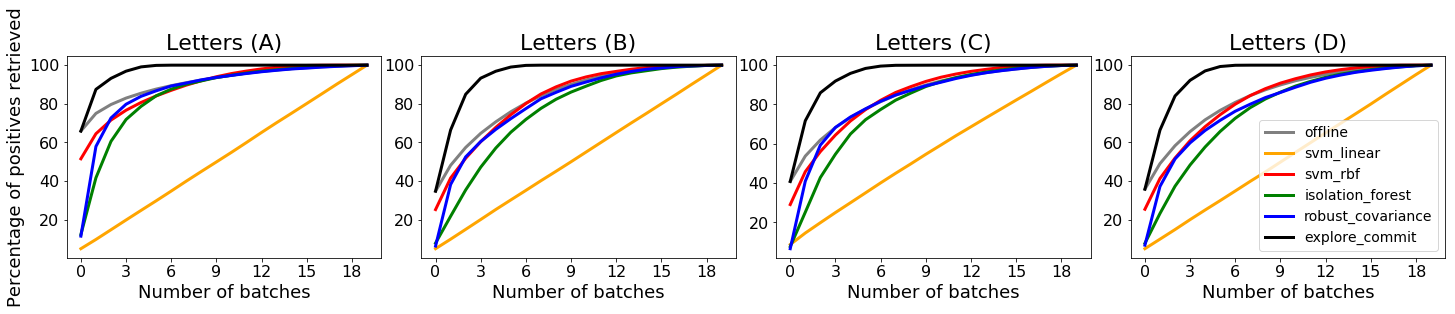}
\includegraphics[width=\linewidth]{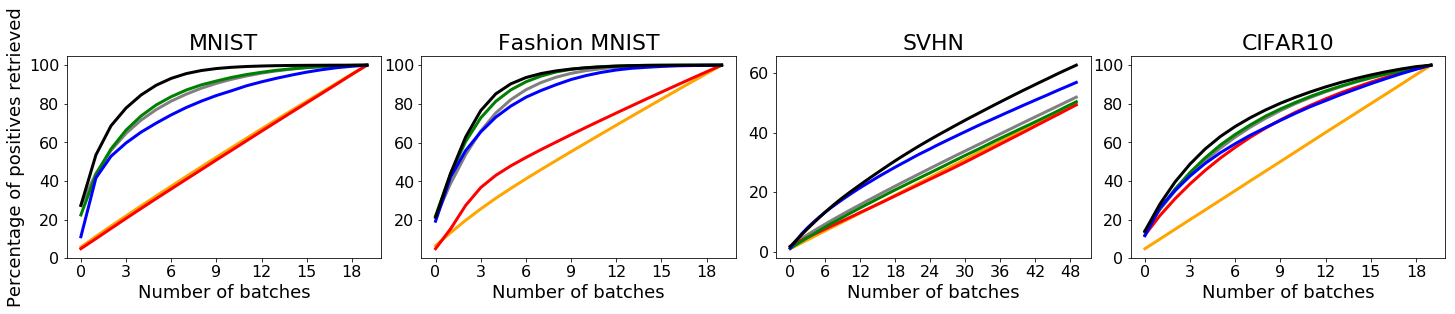}
\includegraphics[width=\linewidth]{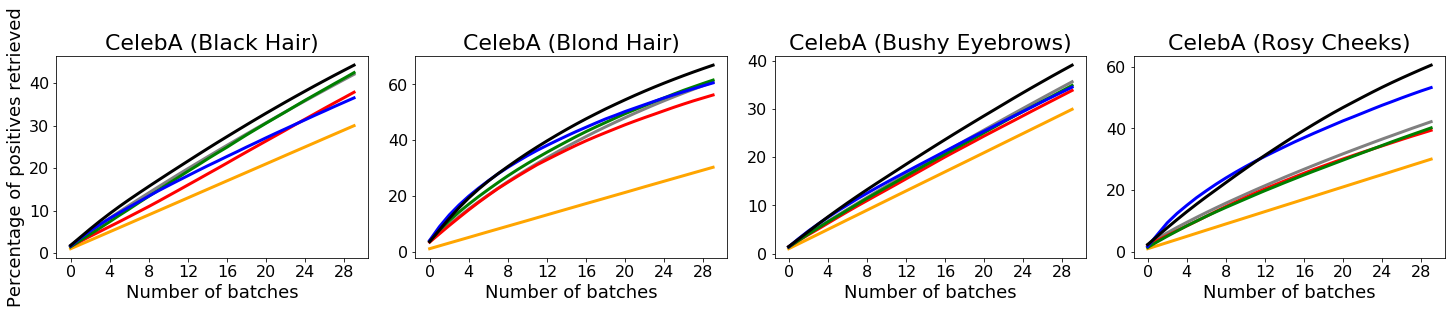}
\vspace{-0.7cm}
\caption{\label{fig:main_chart}{\bf Plots of percentage of all the positive examples retrieved after each batch}. {\bf Top}: Letters Recognition dataset for the first $4$ letters as the positive classes. {\bf Middle}: Various datasets using the label $4$ as the positive class. {\bf Bottom}: CelebA using various attributes as the label. We
compare our Explore-Commit method against the offline algorithm as well as the active variants of the baselines we tested across. We see that in all these cases, our method performs the best across batch sizes. Results averaged across $100$ runs.}
\end{figure*}

\section{Experiments} \label{sec:experiments}
In this section, we describe the details of our experimental results. We test the Explore-then-Commit algorithm (Algorithm~\ref{alg:active_explore_commit}) against a number of baselines including the offline algorithm (Algorithm~\ref{alg:offline}). We note that these algorithms do not come with any additional hyperparameters. 

\subsection{Baselines}
We propose a number of additional baselines based on the one-class classification methods implemented in scikit-learn \citep{pedregosa2011scikit} that can score examples based on likelihood that they are in-class, namely the One-Class SVM \citep{scholkopf2001estimating}, Isolation Forest \citep{liu2008isolation}, and Robust Covariance \cite{rousseeuw1999fast}.  For each of these baselines, we have variants: offline and active. The offline version trains the one-class classifier on the positive examples in the initial sample (see next subsection) and scores all of the unlabeled examples and then samples in order from most likely to least likely of being in-class. The active version retrains the one-class classifier on the label queried examples found thus far after each batch, and then scores the remaining examples to choose the next batch. All methods have the property of only utilizing the positive queried examples for learning.

We thus can list the baselines: {\bf 1.} Offline (O);
{\bf 2.} Offline Linear SVM (O-LS);
{\bf 3.} Active Linear SVM (A-LS);
{\bf 4.} Offline RBF SVM (O-RS);
{\bf 5.} Active RBF  SVM (A-RS);
{\bf 6.} Offline Isolation Forest (O-IF);
{\bf 7.} Active Isolation Forest  SVM (A-IF);
{\bf 8.} Offline Robust Covariance (O-RC);
{\bf 9.} Active Robust Covariance (A-RC).

\subsection{Experiment Setup}

For each of the datasets, we combine all of the data (i.e. any predetermined train/test/validation splits) into one dataset and operate on this dataset. We fix the initial sample size to a random stratified sample of $100$ datapoints. We train a neural network on the initial sample and use the activations of the second-last layer (i.e. the layer immediately before the logits) as an embedding for the data and we fix the embedding throughout and all of the methods will operate on this embedding instead of the original input. This is because recent work has shown that it's effective to use classical methods on the intermediate embeddings of the neural network \citep{papernot2018deep,bahri2020deep,bahri2021locally}; moreover, the original features may have undesirable properties (i.e. relative scaling of the features, high-dimensional pixel data, etc) which can hurt the performance of some of the baselines.

For each dataset, we run experiments using each of the classes as the positive class, as the datasets we used were all multiclass (with the exception of CelebA, which came with a wide range of binary attributes which we used as classes).
Then for each of the datasets with the exception of SVHN and CelebA, we let the batch size be $5\%$ of the remainder of the dataset (i.e. after removing the initial sample) to obtain $20$ batches, and for SVHN and CelebA, due to their size, we let the batch size be $1\%$ of the remainder of the dataset and ran for $50$ batches for SVHN and $30$ batches for CelebA. For all of the experimental results, we averaged across $100$ runs randomizing over different initial samples and ran on a cluster of NVIDIA\textsuperscript{TM} Tesla\textsuperscript{TM} V100 Tensor Core GPUs.

\subsection{Datasets and Embeddings}
We tested on the following datasets:\\
{\bf 1: UCI Letters Recognition} \cite{Dua:2019}, which has $20000$ datapoints and $16$ features based on various statistics on the pixel intensities of the original images of the letters, with $26$ classes -- one for each letter. To train the embedding, we used a fully-connected network with one hidden layer of $100$ units and ReLU activations and trained for $20$ epochs.\\
{\bf 2: MNIST}, with 70000 28x28 pixel grayscale images of handwritten  digits and $10$ classes. We use the same model and epochs as Letters for the embeddings. \\
{\bf 3: Fashion MNIST} \cite{xiao2017fashion}, with same dimensions and embedding training procedure as that of MNIST.\\
{\bf 4: CIFAR10} with 60000 32x32 colour images in 10 classes. For the embeddings use a simple VGG \cite{zhang2015accelerating} network and extract the second-last layer which has $128$ dimensions and train for $100$ epochs.\\
{\bf 5: SVHN} \citep{netzer2011reading} with 99289 color images, cropped to 32x32 pixels. For the embeddings, we use LeNet5 \cite{lecun1998gradient} and train for $20$ epochs. \\
{\bf 6. CelebA} \cite{liu2018large} a large-scale face attributes dataset with more than 162770; we resized the images to 28x28 celebrity images. The dataset has 40 attribute annotations which we use as separate binary classification tasks.  We use the same embedding procedure as that of SVHN.

\subsection{Hyperparameters}
Our method doesn't come with any additional hyperparameters; however the baselines do require the tuning of hyperparameters. For these methods, we perform $5$-fold cross-validation on the initial sample using accuracy as the metric (these methods as implemented in scikit-learn have predict methods which classifies whether an example is an outlier relative to the positive class). For the SVM methods, we tune the gamma parameter (kernel coefficient) and nu (upper bound on the fraction of training errors and a lower bound of the fraction of support vectors). For Isolation Forest, we tune the number of estimators in the ensemble. For Robust Covariance, we tune the proportion of contamination of the data set. For all of these aforementioned hyperparameters, we search over a grid of powers of two.

\subsection{Evaluation Metrics}
We plot the percentage of positive examples label queried across batches for each of the methods to illustrate the performance of each method. We also compute the area under the curve for each method, defined as the average number of positive examples retrieved across each batch, and use this as the primary evaluation metric to compare the methods. Since we average over $100$ runs, we also compute an error band on the area under the curve metric. We do this by computing the standard deviation of percentage of positive examples retrieved for each of the $20$ (or $50$ and $30$ in the case of SVHN and CelebA) batches. We average across these standard deviations and divide by square root of the number of runs to obtain an estimate of the standard deviation of the mean. We then form a $95\%$ confidence band based on this and consider methods which have overlapping bands as statistical ties.

\subsection{Results}
We show the results for each baseline and dataset/label pair under our area under the curve metric in Table~\ref{tab:auc}. Due to space, we could only show partial results for Letters and defer the rest along with the CelebA results to the Appendix. We nonetheless summarize all of the results here:\\
{\bf 1. Letters}. Our proposed method, Explore-then-Commit, outright outperforms all the other baselines on all $26$ tasks.\\
{\bf 2. MNIST}. Our method again outright outperforms all the other baselines on all $10$ tasks.\\
{\bf 3. Fashion MNIST}. Our method performs competitively on $9$ out of the $10$ tasks, with the next most competitive baseline (Active Isolation Forest) being competitive on $7$ out of the $10$ tasks.\\
{\bf 4. CIFAR10}. Here, our method performs competitively on $6$ of the $10$ tasks. It's worth noting that we only perform poorly when all of the methods perform poorly suggesting that in such settings not much learning is possible (i.e. from Table~\ref{tab:auc}, we only perform non-competitively when the AUC metric is under $55\%$. A passive learner that samples uniformly at random is expected to have an AUC of $50\%$).\\
{\bf 5. SVHN}. Our method is competitive for all tasks and outright wins for all but one task.\\
{\bf 6. CelebA}. Our method is competitive for 32 out of the 40 tasks. Due to space, the results are shown in the Appendix. We again see a similar pattern as in CIFAR10 where our method only performs poorly when all of the methods perform poorly (i.e. we only perform non-competitively when the AUC metric is under $20\%$. For comparison, a passive learner is expected to have an AUC of $15\%$).

\begin{table*}
\centering
\begin{tabular}{ |c|c|c|c|c|c|c|c|c|c|c|c|}
        \hline
   Dataset  &  Label & O & O-LS & A-LS & O-RS & A-RS & O-IF & A-IF & O-RC & A-RC & EC (Ours) \\
   \hline \hline
    \multirow{6}{*}{Letters}  & A & 91.14 & 52.52 & 52.49 & 84.81 & 89.02 & 59.52 & 84.69 & 64.27 & 86.87 & {\bf 97.12} \\ \cline{2-12}
 & B & 83.41 & 52.73 & 52.57 & 75.95 & 82.41 & 56.13 & 75.89 & 61.38 & 80.18 & {\bf 93.76} \\ \cline{2-12}
 & C & 84.48 & 56.19 & 56.08 & 75.92 & 83.78 & 55.78 & 78.68 & 59.21 & 81.92 & {\bf 94.2} \\ \cline{2-12}
 & D & 83.51 & 52.57 & 52.45 & 76.14 & 82.23 & 56.09 & 76.03 & 61.51 & 78.83 & {\bf 93.73} \\ \cline{2-12}
 & E & 78.6 & 52.85 & 52.99 & 74.84 & 81.41 & 55.77 & 73.7 & 60.17 & 77.27 & {\bf 89.5} \\ \cline{2-12}
 & F & 83.63 & 53.4 & 53.41 & 78.72 & 83.16 & 57.44 & 77.83 & 64.69 & 80.88 & {\bf 94.0} \\ \cline{2-12}
 & G & 82.23 & 52.72 & 52.67 & 75.76 & 81.88 & 58.15 & 74.58 & 63.45 & 79.79 & {\bf 92.35} \\  \hline\hline
  \multirow{10}{*}{MNIST}  & 0 & 86.67 & 81.81 & 81.96 & 52.53 & 52.95 & 83.44 & 90.8 & 74.31 & 86.48 & {\bf 94.44} \\ \cline{2-12}
 & 1 & 95.89 & 55.22 & 55.11 & 58.47 & 90.31 & 90.16 & 94.27 & 87.34 & 89.8 & {\bf 96.46} \\ \cline{2-12}
 & 2 & 75.86 & 60.64 & 60.37 & 52.54 & 52.56 & 72.66 & 80.18 & 62.81 & 77.83 & {\bf 85.47} \\ \cline{2-12}
 & 3 & 80.77 & 60.64 & 60.36 & 52.52 & 52.65 & 76.67 & 84.31 & 66.87 & 81.03 & {\bf 87.63} \\ \cline{2-12}
 & 4 & 83.05 & 54.23 & 54.14 & 52.47 & 53.08 & 76.86 & 83.61 & 66.71 & 78.31 & {\bf 89.14} \\ \cline{2-12}
 & 5 & 75.59 & 52.88 & 52.81 & 52.51 & 52.63 & 61.65 & 69.44 & 57.82 & 71.18 & {\bf 87.24} \\ \cline{2-12}
 & 6 & 86.53 & 59.98 & 59.77 & 52.49 & 54.03 & 81.19 & 88.33 & 67.37 & 81.95 & {\bf 93.24} \\ \cline{2-12}
 & 7 & 87.05 & 55.83 & 55.63 & 52.54 & 57.02 & 80.71 & 87.26 & 70.14 & 81.76 & {\bf 91.62} \\ \cline{2-12}
 & 8 & 75.7 & 56.27 & 56.17 & 52.49 & 52.62 & 69.73 & 78.3 & 61.71 & 77.32 & {\bf 83.37} \\ \cline{2-12}
 & 9 & 84.91 & 54.64 & 54.7 & 52.51 & 55.06 & 77.22 & 84.66 & 67.88 & 79.79 & {\bf 90.71} \\ \hline\hline
  \multirow{10}{*}{\shortstack[l]{Fashion\\ MNIST}}  & 0 & 87.81 & 54.51 & 54.49 & 52.9 & 66.76 & 86.35 & {\bf 90.14} & 81.5 & 87.44 & {\bf 89.75} \\ \cline{2-12}
 & 1 & 94.73 & 55.84 & 55.67 & 55.12 & 85.21 & 92.67 & 94.73 & 90.42 & 92.54 & {\bf 95.93} \\ \cline{2-12}
 & 2 & 84.44 & 55.57 & 55.57 & 52.72 & 63.65 & 82.97 & {\bf 87.6} & 78.46 & 85.6 & {\bf 87.19} \\ \cline{2-12}
 & 3 & 88.86 & 53.15 & 53.15 & 52.64 & 63.56 & 85.39 & 89.74 & 83.08 & 87.06 & {\bf 91.29} \\ \cline{2-12}
 & 4 & 84.9 & 56.47 & 56.41 & 52.68 & 62.54 & 83.23 & {\bf 87.25} & 79.68 & 83.61 & {\bf 88.09} \\ \cline{2-12}
 & 5 & 88.09 & 52.54 & 52.52 & 52.62 & 57.14 & 79.59 & 84.7 & 82.92 & 75.2 & {\bf 89.16} \\ \cline{2-12}
 & 6 & 77.31 & 52.5 & 52.5 & 52.98 & 63.62 & 75.94 & {\bf 81.63} & 71.46 & 80.18 & {\bf 81.1} \\ \cline{2-12}
 & 7 & 94.41 & 52.47 & 52.49 & 52.97 & 69.91 & 92.66 & {\bf 95.06} & 90.72 & 93.05 & {\bf 95.17} \\ \cline{2-12}
 & 8 & 82.86 & 55.43 & 55.46 & 52.5 & 54.1 & 78.23 & {\bf 86.56} & 78.33 & 83.97 & {\bf 85.96} \\ \cline{2-12}
 & 9 & 92.5 & 70.39 & 70.13 & 52.59 & 58.31 & 90.93 & {\bf 94.12} & 90.35 & 91.4 & 93.49 \\ \hline \hline 
\multirow{10}{*}{CIFAR10} & 0 & 67.06 & 54.33 & 54.27 & 64.84 & 64.8 & 64.6 & 68.03 & 65.49 & 69.36 & {\bf 70.69} \\ \cline{2-12}
 & 1 & {\bf 57.24} & 52.57 & 52.54 & {\bf 55.67} & 55.01 & 54.25 & 53.68 & {\bf 57.68} & 50.3 & 54.06 \\ \cline{2-12}
 & 2 & 65.43 & 52.48 & 52.51 & 59.75 & 59.92 & 62.09 & 64.61 & 63.42 & 65.15 & {\bf 68.71} \\ \cline{2-12}
 & 3 & {\bf 53.98} & 52.5 & 52.51 & {\bf 53.21} & {\bf 53.28} & {\bf 53.34} & 52.8 & {\bf 54.13} & 50.09 & 52.2 \\ \cline{2-12}
 & 4 & 70.94 & 52.5 & 52.55 & 66.68 & 67.52 & 68.54 & 71.52 & 67.95 & 68.29 & {\bf 73.97} \\ \cline{2-12}
 & 5 & {\bf 57.59} & 52.53 & 52.48 & 56.13 & 56.09 & 56.55 & 56.7 & {\bf 58.83} & 53.12 & 53.9 \\ \cline{2-12}
 & 6 & 72.16 & 52.48 & 52.49 & 67.67 & 67.89 & 67.79 & 70.87 & 70.78 & 65.26 & {\bf 74.73} \\ \cline{2-12}
 & 7 & {\bf 58.51} & 52.54 & 52.53 & 56.0 & 55.93 & 56.48 & {\bf 57.7} & {\bf 58.07} & 53.92 & {\bf 58.36} \\ \cline{2-12}
 & 8 & {\bf 70.25} & 52.48 & 52.47 & 66.67 & 66.86 & 67.92 & {\bf 71.57} & 68.13 & 69.91 & {\bf 71.42} \\ \cline{2-12}
 & 9 & 62.79 & 52.54 & 52.49 & 61.8 & 61.74 & 63.51 & {\bf 66.18} & 63.97 & 62.65 & 55.63 \\ 
 \hline\hline
 \multirow{10}{*}{SVHN} & 0 & 31.11 & 25.47 & 25.49 & 28.54 & 29.89 & 28.0 & 30.12 & 31.57 & {\bf 39.59} & {\bf 39.67} \\ \cline{2-12}
 & 1 & 28.23 & 25.53 & 25.52 & 25.63 & 25.25 & 25.08 & 25.44 & 32.81 & 35.19 & {\bf 37.32} \\ \cline{2-12}
 & 2 & 28.66 & 25.53 & 25.52 & 26.28 & 26.49 & 26.32 & 26.69 & 30.86 & 32.68 & {\bf 34.31} \\ \cline{2-12}
 & 3 & 28.19 & 25.57 & 25.56 & 26.11 & 26.47 & 26.34 & 26.79 & 29.37 & 31.0 & {\bf 33.81} \\ \cline{2-12}
 & 4 & 28.01 & 25.51 & 25.49 & 25.54 & 25.16 & 25.21 & 26.66 & 27.31 & 33.4 & {\bf 36.31} \\ \cline{2-12}
 & 5 & 29.02 & 25.56 & 25.53 & 26.55 & 26.98 & 26.77 & 27.75 & 29.67 & 32.81 & {\bf 34.44} \\ \cline{2-12}
 & 6 & 28.8 & 25.49 & 25.5 & 26.37 & 26.6 & 26.24 & 27.72 & 28.7 & 32.34 & {\bf 35.29} \\ \cline{2-12}
 & 7 & 29.36 & 25.52 & 25.48 & 26.46 & 26.18 & 25.7 & 27.22 & 27.79 & 35.14 & {\bf 37.62} \\ \cline{2-12}
 & 8 & 28.04 & 25.51 & 25.48 & 26.29 & 27.03 & 26.41 & 27.43 & 27.29 & 31.43 & {\bf 33.37} \\ \cline{2-12}
 & 9 & 28.48 & 25.49 & 25.49 & 26.82 & 27.5 & 26.28 & 28.04 & 27.62 & 32.83 & {\bf 34.36} \\ 
 \hline 
    \end{tabular}
    \caption{\label{tab:auc}{\bf Area under the curve metric for various benchmark image-based datasets}. For each of the datasets and possible labels, we show the area under the curve metric averaged across $100$ runs, with the top value bolded (any methods whose $95\%$ confidence intervals overlap were considered statistical ties). Due to space, we show the rest of the Letters results as well as the CelebA results in the Appendix.}
\end{table*}
\section{Conclusion}

We have formalized the problem of active covering and introduced several baselines, including a principled active approach that attains better guarantees than the offline algorithm. We showed in experiments that our proposed Explore-the-Commit algorithm has strong performance against a number of baselines while having desirable properties including not having additional hyperparameters, and not needing to store or use the queried negative examples. Future work involves extending theoretical analysis relaxing the hard boundary density assumption on $\X_+$, letting $\X_+$ be a lower dimensional manifold embedded in the $D$-dimensional space (rather than being full-dimensional) and attain excess query cost guarantees that depend on this lower dimension, and investigating the computational and privacy implications of such approaches.

{
\bibliography{main}
\bibliographystyle{icml2021}
}

\clearpage
{
\appendix
\onecolumn
{\Large \bf Appendix}
\section{Proofs}

\subsection{Proofs for Section~\ref{sec:active_covering}}

\begin{proof}[Proof of Theorem~\ref{theorem:passive}]
Consider the distribution where there are two connected components, one for $\X_+$ and one for $\X_-$, each with mixture probability $p = \frac{1}{2}$. Thus, Assumption~\ref{a1} holds and we are choose to free the other parameters of the distribution in any way that satisfies Assumption~\ref{a2} and \ref{a3} (e.g. a mixture of uniform density functions satisfies these assumptions). Now note that with probability $\frac{1}{2}$, the final point  that is label queried by the passive learner will be positive and, thus, the passive algorithm will need to query all of the points with probability $\frac{1}{2}$ in order to retrieve all positive points. In such an event, the excess query cost is at least $\frac{1}{2} \cdot n$.
\end{proof}

\subsection{Proofs for Section~\ref{sec:offline}}

Much of our technical results require the following uniform high-probability guarantee that balls of sufficient probability mass contain an example:
\begin{lemma}
\label{lemma:ball_bound}
Let $0 < \delta < 1$ and $\mathcal{F}$ be some distribution over $\mathbb{R}^D$ and $X$ be a sample of size $n$ drawn i.i.d. from $\mathcal{F}$. There exists universal constant $C_0$ such that the following holds with probability at least $1-\delta$ uniformly over all balls $B \in \mathbb{R}^D$:
\begin{align*}
    \mathcal{F}(B) \ge \frac{C_0\cdot D\cdot \log(2/\delta)\log n}{n} \Rightarrow |B \cap X| > 0.
\end{align*}
\end{lemma}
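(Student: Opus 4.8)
The plan is to read this off from a single uniform deviation inequality for the family $\mathcal{B}$ of all closed Euclidean balls in $\mathbb{R}^D$. Writing $\widehat{\mathcal{F}}_n(B) := |B \cap X|/n$ for the empirical mass, the asserted implication ``$\mathcal{F}(B)$ large $\Rightarrow |B\cap X|>0$'' is precisely the contrapositive of ``$\widehat{\mathcal{F}}_n(B)=0 \Rightarrow \mathcal{F}(B)$ small'', uniformly over $B$. So it suffices to bound, with probability $1-\delta$ and simultaneously over all balls, the true mass of any ball that happens to contain no sample point.

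For that I would use a relative (Bernstein-type) VC deviation inequality. The class $\mathcal{B}$ has VC dimension $D+1$ (lift $x\mapsto(x,\|x\|^2)$ to turn balls into half-spaces in one extra coordinate), so by Sauer's lemma its shatter coefficient at $2n$ points is at most $(2en)^{D+1}$. The standard relative VC bound then gives: with probability at least $1-\delta$, for every $B \in \mathcal{B}$,
\[
  \mathcal{F}(B) \;\le\; \widehat{\mathcal{F}}_n(B) \;+\; \sqrt{\frac{2\,\widehat{\mathcal{F}}_n(B)\,s_n}{n}} \;+\; \frac{4 s_n}{n},
  \qquad s_n \,:=\, (D+1)\log(2en) + \log(1/\delta).
\]
Restricting to balls with $\widehat{\mathcal{F}}_n(B) = 0$, this collapses to $\mathcal{F}(B) \le 4 s_n / n$.

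To finish, I would absorb $4 s_n/n$ into the target expression: since $\log(1/\delta)\le\log(2/\delta)$, $D\ge 1$, and (in the regime where the statement is non-vacuous) $\log n \ge 1$, there is a universal constant $C_0$ with $4 s_n/n \le C_0\, D\log(2/\delta)\log n / n$; when $n$ is so small that this fails, the claimed threshold can be taken to exceed $1\ge\mathcal{F}(B)$, making the implication vacuous. Taking the contrapositive on the same $(1-\delta)$-probability event yields exactly: every ball $B$ with $\mathcal{F}(B) \ge C_0 D\log(2/\delta)\log n/n$ contains a sample point.

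The only genuine work here is the constant bookkeeping — selecting a convenient form of the relative VC inequality and collapsing its $(D+1)\log(2en)$, $\log(1/\delta)$, and additive $1/n$ contributions into one clean $C_0 D\log(2/\delta)\log n/n$ term. If one prefers not to invoke the relative VC bound, the same conclusion can be reached from scratch: for a fixed ball $\Pr[|B\cap X|=0]=(1-\mathcal{F}(B))^n\le e^{-n\mathcal{F}(B)}$, and a symmetrization argument combined with Sauer's lemma (at most $(2en)^{D+1}$ distinct traces $X\cap B$) upgrades this to a uniform statement over $\mathcal{B}$; the relative VC route is just the shortest path to the $\log n$ (as opposed to $\log^2 n$) dependence.
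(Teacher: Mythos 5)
Your proof is correct and follows essentially the same route as the paper: the paper simply cites Lemma~7 of Chaudhuri and Dasgupta (2010), which is itself established via exactly the relative (Bernstein-type) VC deviation inequality for the class of Euclidean balls that you invoke, followed by the same contrapositive and constant-absorption steps. Your write-up is just a self-contained re-derivation of that cited result, with sound handling of the VC dimension, the collapse at $\widehat{\mathcal{F}}_n(B)=0$, and the small-$n$ vacuity case.
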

\begin{proof}
This follows by Lemma 7 of \citet{chaudhuri2010rates}.
\end{proof}

The following result 
bounds the volume of the $\epsilon$-neighborhood around $\X_+$, which will be used later to bound the excess number of points queried around $\X_+$. The result says that the volume of the $\epsilon$-neighboorhood around $\X_+$ (and not including $\X_+$) is linear in $\epsilon$. 
\begin{lemma}\label{lemma:volume}
Suppose Assumption~\ref{a2} holds. Then there exists constants $r_1, C_+' > 0$ depending only on $\X_+$ such that for all $0 < \epsilon < r_1$, we have
\begin{align*}
    \text{Vol}(B(\X_+, \epsilon) \backslash \X_+) \le C_+'\cdot \epsilon,
\end{align*}
where $B(\X_+, \epsilon) := \{x \in \mathbb{R}^D : \inf_{x' \in \X_+} |x-x'| \le \epsilon\}$.
\end{lemma}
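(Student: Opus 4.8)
The plan is to exploit the interior cone/density condition from Assumption~\ref{a2} --- namely that $\text{Vol}(B(x,r)\cap \X_+) \ge C_+ \text{Vol}(B(x,r))$ for all $x \in \X_+$ and $r < r_0$ --- to control how much ``new'' volume the $\epsilon$-neighborhood can add. The key observation is that every point $y \in B(\X_+,\epsilon)\setminus \X_+$ lies within distance $\epsilon$ of some $x \in \X_+$, and hence within the closed ball $B(x,\epsilon)$. So $B(\X_+,\epsilon)\setminus \X_+ \subseteq \bigcup_{x \in \X_+} \bigl(B(x,\epsilon)\setminus \X_+\bigr)$; the trouble is that this union is over uncountably many $x$, so I cannot just sum volumes. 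The standard fix is a covering/packing argument: cover $\X_+$ (or rather a suitable boundary layer of it) by a finite family of balls of radius comparable to $\epsilon$ with bounded overlap.

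First I would reduce to the boundary layer. Only points $x \in \X_+$ whose distance to the complement $\mathbb{R}^D \setminus \X_+$ is at most $\epsilon$ can contribute a non-$\X_+$ point within distance $\epsilon$; call this set $\partial_\epsilon \X_+$. So it suffices to bound $\text{Vol}\bigl(B(\partial_\epsilon \X_+, \epsilon)\bigr)$. Next I would take a maximal $\epsilon$-packing $\{x_1,\dots,x_N\}$ of $\partial_\epsilon \X_+$ (points pairwise at distance $> \epsilon$, hence the balls $B(x_i,\epsilon/2)$ are disjoint); by maximality the balls $B(x_i,\epsilon)$ cover $\partial_\epsilon \X_+$, and consequently $B(\partial_\epsilon \X_+,\epsilon) \subseteq \bigcup_i B(x_i, 2\epsilon)$. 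This gives
\begin{align*}
    \text{Vol}\bigl(B(\X_+,\epsilon)\setminus \X_+\bigr) \le N \cdot v_D (2\epsilon)^D,
\end{align*}
where $v_D$ is the volume of the unit ball. It remains to bound the packing number $N$. Here I use that the disjoint balls $B(x_i,\epsilon/2)$ each satisfy $\text{Vol}(B(x_i,\epsilon/2)\cap\X_+) \ge C_+ v_D (\epsilon/2)^D$ (valid once $\epsilon/2 < r_0$), and they are disjoint, so $N \cdot C_+ v_D (\epsilon/2)^D \le \text{Vol}(\X_+) =: V_+ < \infty$ since $\X_+$ is compact. Hence $N \le V_+ / (C_+ v_D (\epsilon/2)^D)$, which would give a bound of order $V_+/C_+ \cdot 2^{2D} = O(1)$ --- but that is a constant, not the claimed $O(\epsilon)$, so this crude step loses the linear factor.

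The main obstacle --- and the place where real work is needed --- is recovering the linear-in-$\epsilon$ dependence rather than a constant. The point is that $\partial_\epsilon \X_+$ is genuinely a thin tube of width $\sim\epsilon$ around the topological boundary $\partial \X_+$, not all of $\X_+$, so its packing number should be of order $\text{Vol}(\partial_\epsilon\X_+)/\epsilon^D \sim \mathcal{H}^{D-1}(\partial\X_+)/\epsilon^{D-1}$, yielding $N (2\epsilon)^D \sim \epsilon$. To make this rigorous I would argue that the interior-cone condition, together with compactness and the finite-number-of-connected-components hypothesis, forces $\X_+$ to satisfy a (one-sided) regularity making $\partial \X_+$ have finite $(D-1)$-dimensional upper Minkowski content: concretely, I would bound $\text{Vol}(\partial_\epsilon\X_+) = \text{Vol}(\{x\in\X_+ : d(x,\X_+^c)\le\epsilon\})$ directly. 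One clean way: for each such $x$ there is a point $z \notin \X_+$ with $|x-z|\le\epsilon$, so by the density lower bound condition applied at... actually it is cleaner to use the complement's regularity, but Assumption~\ref{a2} only gives us regularity of $\X_+$. So instead I would combine (i) a Vitali-type cover of $\partial_\epsilon\X_+$ by balls $B(x_j,\epsilon)$ with bounded overlap, where each $x_j \in \partial_\epsilon\X_+$, with (ii) the fact that each such ball $B(x_j, 2\epsilon)$ contains a point outside $\X_+$, hence $\text{Vol}(B(x_j,2\epsilon)\setminus\X_+) \ge c\,\epsilon^D$ for a constant $c$ depending on $C_+$ and $D$ (this is the reverse inequality --- the complement occupies a fixed fraction --- which follows because if the complement occupied less than a small fraction of $B(x_j,2\epsilon)$ then a slightly smaller concentric ball would lie inside $\X_+$, contradicting $d(x_j,\X_+^c)\le\epsilon$ once constants are chosen), and (iii) these complement-portions, being contained in the fixed-width tube $B(\X_+,2\epsilon)\setminus\X_+$, have total volume at most $\text{Vol}(B(\X_+,2\epsilon)\setminus\X_+)$. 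Steps (i)–(iii) give $N \cdot c\,\epsilon^D \lesssim \text{Vol}(B(\X_+,2\epsilon)\setminus\X_+)$, and then $\text{Vol}(B(\X_+,\epsilon)\setminus\X_+) \le N v_D(2\epsilon)^D \lesssim \epsilon^{-D}\cdot\text{Vol}(B(\X_+,2\epsilon)\setminus\X_+)\cdot\epsilon^D$ --- which is circular. The honest resolution is that the genuinely needed ingredient is an a priori bound $\text{Vol}(B(\X_+,\epsilon)\setminus\X_+) \le A \epsilon^{D-1}\cdot\epsilon = A\epsilon^D \cdot (\text{something})$; the cleanest path is to invoke that Assumption~\ref{a2}'s interior condition is known in the support-estimation literature (e.g.\ \citealp{cuevas1997plug,singh2009adaptive}) to imply $\X_+$ is a set of finite perimeter / positive reach on each component, so that $\epsilon \mapsto \text{Vol}(B(\X_+,\epsilon))$ is differentiable from the right at $0$ with derivative the perimeter $P(\X_+) < \infty$; taking $C_+' := 2 P(\X_+)$ and $r_1$ small enough that the difference quotient is within a factor $2$ of the derivative then finishes the proof. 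I expect the referee-facing write-up to lean on this perimeter/Minkowski-content fact, citing the standard references, with the packing argument above as the quantitative backbone; pinning down the constant $C_+'$ explicitly in terms of $C_+, D, r_0$ and the finitely many components is the fiddly part but is routine given that backbone.
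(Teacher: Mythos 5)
You have correctly located the crux --- the packing argument gives only an $O(1)$ bound, and recovering the factor of $\epsilon$ requires knowing that $\partial\X_+$ has finite $(D-1)$-dimensional outer Minkowski content, which is essentially the statement being proved --- but the proposal does not close this gap, and the escape route you invoke at the end is not available. The claim that the interior standardness condition of Assumption~\ref{a2} implies finite perimeter or positive reach is false: standardness is a measure-density condition and is compatible with fractal boundaries. The closed Koch snowflake domain in $\mathbb{R}^2$ is a concrete witness: it is compact and connected, and it satisfies an interior corkscrew condition, hence $\text{Vol}(B(x,r)\cap\X_+)\ge C_+\,\text{Vol}(B(x,r))$ for all $x\in\X_+$ and small $r$, so Assumption~\ref{a2} holds with a uniform density on it; yet its boundary has Minkowski dimension $\log 4/\log 3>1$ and $\text{Vol}(B(\X_+,\epsilon)\setminus\X_+)\asymp\epsilon^{\,2-\log 4/\log 3}$, which is not $O(\epsilon)$. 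So Lemma~\ref{lemma:volume} cannot be derived from Assumption~\ref{a2} alone, and no refinement of the covering argument will manufacture the missing regularity. (Your subsidiary claim in step (ii), that a ball centered at a point within $\epsilon$ of $\X_+^c$ must contain a definite \emph{fraction} of complement volume, is also unjustified --- a subset of full measure in a ball can still miss a prescribed nearby point --- but this is moot given the circularity you already flagged.)

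For comparison, the paper's proof takes a different and much shorter route: it cites the tube-volume expansion of \citet{gorin1983volume} for smooth compact Riemannian manifolds, applied with $M=\X_+$, $k=0$, and $f$ the identity, to conclude $\text{Vol}(B(\X_+,\epsilon))=\text{Vol}(\X_+)+O(\epsilon)$. In doing so it implicitly treats $\X_+$ as a smooth compact manifold, i.e.\ it imports exactly the finite-Minkowski-content regularity you identified as ``the genuinely needed ingredient'' as an unstated hypothesis rather than deriving it from Assumption~\ref{a2}. Your instinct to reduce the lemma to a boundary-regularity statement is therefore the right one; the honest fix is to add that regularity (e.g.\ $\partial\X_+$ has finite $(D-1)$-dimensional upper Minkowski content, or $\X_+$ is smoothly bounded) as an explicit assumption, after which either your perimeter/difference-quotient argument or the paper's citation finishes the proof in one line.
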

\begin{proof}[Proof of Lemma~\ref{lemma:volume}]
This follows from \citet{gorin1983volume}. To see this, the equation on page 159 of \citet{gorin1983volume} states that if $M$ and $N$ are respectively  $d$-dimensional and $(d+k)$-dimensional compact smooth Riemannian manifolds and $f : M \rightarrow N$ is a smooth isometric embedding, then we have
\begin{align*}
    \text{Vol}(B(f(M), \epsilon)) = V_k \cdot \epsilon^k \cdot \text{Vol}(M) + O(\epsilon^{k+1}),
\end{align*}
where $V_k$ is the volume of a $k$-dimensional ball.
Here, we take $M = \mathcal{X}_+$ and $N = B(\mathcal{X}_+, r_1)$ for some $r_1 > 0$.
Then, we have $k = 0$ and taking $f$ to be the identity function, we have
\begin{align*}
    \text{Vol}(B(\X_+, \epsilon)) = \text{Vol}(\X_+) + O(\epsilon),
\end{align*}
and the result follows immediately.
\end{proof}

\begin{proof}[Proof of Theorem~\ref{theorem:offline}]

By Hoeffding's inequality, out of the initial $m$ examples that Algorithm~\ref{alg:offline} label queries, we have with probability at least $1 - \delta/2$ that at least $p - \sqrt{\frac{1}{2m}\cdot  \log(2/\delta)}$ fraction of them are positively labeled, since the example being positive follows a Bernoulli distribution with probability $p$. Then by the condition on $m$, we have that at least $p/2$ fraction of the points are positively labeled.

Take \begin{align*}
\epsilon = \left(\frac{2\cdot C_0\cdot D\cdot \log(4/\delta)\log(p\cdot m/2)}{p^2 \cdot \lambda_0\cdot C_+\cdot v_D \cdot m}\right)^{1/D}, \hspace{0.5cm} M_0 =\max\left\{ \frac{2\cdot C_0\cdot D(\log(p\cdot /2) + 1)}{p^2 \cdot \lambda_0\cdot C_+\cdot v_D \cdot \min\{r_0,r_1\}^D},  2e\right\},
\end{align*}
where $v_D$ is the volume of a unit ball in $\mathbb{R}^D$. Then, we have that the condition on $m$ and $M_0$ guarantees that $\epsilon < \min\{r_0, r_1\}$.

Let $x \in \X_+$. We have that the probability mass of positive examples in $B(x, \epsilon)$ w.r.t. $\mathcal{P}$ is:
\begin{align*}
    p\cdot \mathcal{P}_+(B(x, \epsilon)) &\ge p\cdot \lambda_0 \cdot \text{Vol}(B(x, \epsilon) \cap \X_+)\\
    &\ge p\cdot \lambda_0 \cdot C_+ \cdot \text{Vol}(B(x, \epsilon))\\
    &\ge p\cdot \lambda_0 \cdot C_+ \cdot v_D \cdot \epsilon^D \\
    &\ge  \frac{2\cdot C_0\cdot D\log(4/\delta)\log(p\cdot m/2)}{p\cdot m}. 
\end{align*}
Then by Lemma~\ref{lemma:ball_bound}, we have with probability at least $1-\delta/2$ that all the positive examples in $X$ are within $\epsilon$ of one of the positive examples among the initially sampled $m$ examples. Therefore, Algorithm~\ref{alg:offline} retrieves all of the positive examples.

Now we bound the expected regret:
\begin{align*}
    \E[C_{\text{offline}}] &\le (1-p)\cdot m + n\cdot (1-p) \cdot \mathcal{P}_-(B(\X_+, \epsilon)\backslash \X_+) \\
   & \le (1-p)\cdot m + n\cdot (1-p) \cdot \lambda_1\cdot C_+'\cdot \epsilon.
\end{align*}
The result follows.
\end{proof}

\begin{proof}[Proof of Theorem~\ref{theorem:offline_lower}]
Let $\mathcal{P}_+$ be the uniform distribution on the unit hypercube $[0, 1]^D$ and $\mathcal{P}_-$ be the uniform distribution on $[-1, 2]^D$. In the initial sampling phase of Algorithm~\ref{alg:offline}, at most $m$ of the examples will be positively labeled. Let $\widehat{\X_+} = X \cap \left(\cup_{x\in X_{0,+}} B(x, \epsilon) \right)$, the set of points that Algorithm~\ref{alg:offline} labeled.
Then, Theorem 3b in \cite{cuevas1997plug} shows that for $n$ sufficiently large, with probability at least $1/4$, we have
\begin{align*}
    d_H(\widehat{\X_+}, \X_+) \ge \frac{1}{4}\left(\frac{\log m}{m}\right)^{1/D}
\end{align*}
for any $\epsilon > 0$, where $d_H(A, B) := \max \{\sup_{x \in A} d(x, B), \sup_{x \in B} d(x, A) \}$ is the Hausdorff distance. Therefore, we have (in the case of taking $\epsilon \rightarrow 0$):
\begin{align*}
    d_H(X_{0, +}, \X_+) \ge \frac{1}{4}\left(\frac{\log m}{m}\right)^{1/D}.
\end{align*}
Since $X_{0, +} \subseteq \X_+$, it follows that $d_H(X_{0, +}, \X_+) = \sup_{x \in \X_+} d(x, X_{0, +})$. Therefore, we need $\epsilon \ge \frac{1}{4}\left(\frac{\log m}{m}\right)^{1/D}$ in order for Algorithm~\ref{alg:offline} to recover all of the positive examples. Thus, the expected regret is at least (for some $C > 0$)
\begin{align*}
    \E[C_{\text{offline}}] \ge (1-p)\cdot m + C\cdot \left(\frac{\log m}{m}\right)^{1/D} \cdot n,
\end{align*}
as desired.
\end{proof}

\subsection{Proofs for Section~\ref{sec:explore_commit}}

\begin{proof}[Proof of Lemma~\ref{lemma:one_from_each}]
By Hoeffding's inequality, out of the initial $m$ examples that Algorithm~\ref{alg:active_explore_commit} label queries, we have with probability at least $1 - \delta/2$ that at least $p - \sqrt{\frac{1}{2m}\cdot  \log(2/\delta)}$ are fraction of them are positively labeled, since the example being positive follows a Bernoulli distribution with probability $p$. Then by the condition on $m$, we have that at least $p/2$ fraction of the points are positively labeled and thus we have at least $mp/2$ positive examples.

Then, we have that out of these $mp/2$ examples, the probability that none of them are in $\X_{+, i}$ for each $i \in [c]$ is at most
\begin{align*}
    (1 - \mathcal{P}_+(\X_{+, i}))^{mp/2} \le (1 - q)^{mp/2} \le \frac{\delta}{2c}.
\end{align*}
The result follows by union bound.
\end{proof}

\begin{proof}[Proof of Lemma~\ref{lemma:connectedness}]

Let $x, x' \in \X_{+, i}$. There exists a path $x=x_1 \to x_2 \to \ldots \to x_q=x'$ in $\X_{+, i}$ such that $||x_j - x_{j+1}|| \le \epsilon/3$.
We also have that the probability mass of positive examples in $B(x_j, \epsilon/3)$ w.r.t. $\mathcal{P}$ is:
\begin{align*}
    p\cdot \mathcal{P}_+(B(x_j, \epsilon/2)) &\ge p\cdot C_+ \cdot \lambda_0 \cdot v_D \cdot (\epsilon/3)^D \ge \frac{C_0\cdot D \log(2/\delta)\cdot \log n}{n}.
\end{align*}
Therefore by Lemma~\ref{lemma:ball_bound}, there exists $x_j' \in B(x_j, \epsilon/3) \cap X_+$, where $X_+$ are the positive examples in $X$.
Hence, by triangle inequality, there exists path $x=x_1' \to x_2' \to ... \to x_q'=x'$ all in $X_+$ where $||x_j' - x_{j+1}'|| \le ||x_j' - x_j|| + ||x_j - x_{j+1}|| + ||x_{j+1} - x_{j+1}'|| \le \epsilon$ implying that $\X_{+, i} \cap X_+$ is connected in the $\epsilon$-neighborhood graph of $X_+$. The result follows immediately.
\end{proof}

Finally, we combine these two results to show the final excess query cost guarantee for Algorithm~\ref{alg:active_explore_commit}.

\begin{proof}[Proof of Theorem~\ref{theorem:active_explore_commit}]
Take 
\begin{align*}
N_0 = \frac{3^D\cdot C_0\cdot D }{\min\{ r_0, r_1\}^D\cdot p \cdot C_+\cdot \lambda_0\cdot v_D}.
\end{align*}
By Lemma~\ref{lemma:one_from_each}, there exists at least one positive example in the initial $m$ samples from each connected component of $\X_+$.
Define
\begin{align*}
    \epsilon := 3\left(\frac{C_0\cdot D\log(4/\delta)\cdot \log n}{p\cdot C_+\cdot \lambda_0\cdot v_D \cdot n}\right)^{1/D}.
\end{align*}
We have that the condition on $n$ implies that $\epsilon \le \min\{r_0, r_1\}$. By Lemma~\ref{lemma:connectedness}, we have that all of the positive examples of each connected component of $\X_+$ are in the same CC of the $\epsilon$-neighborhood graph of the positive examples.
Therefore, when the algorithm terminates, the set of examples it will select from will be contained in $B(\X_+, \epsilon)$.

Therefore, we have 
\begin{align*}
    \E[C_{\text{exp-commit}}] \le (1-p)\cdot m + C_+' \cdot \epsilon \cdot n
    \le (1-p)\cdot m + C \cdot \left((\log(4/\delta)\cdot \log n\right)^{1/D} \cdot n^{(D-1)/D},
\end{align*}
for some $C$ depending on $\mathcal{P}$, as desired.
\end{proof}

\section{Additional Experiment Plots}
In Table~\ref{table:letters}, we show the full results for the Letters dataset for the area under curve metrics. We see that in all cases, our method outperforms outright. In Table~\ref{table:celeba}, we show the full results for CelebA. We see that our method is competitive for 32 out of the 40 tasks.

\begin{table*}
\centering
\begin{tabular}{ |c|c|c|c|c|c|c|c|c|c|c|c|}
        \hline
   Dataset  &  Label & O & O-LS & A-LS & O-RS & A-RS & O-IF & A-IF & O-RC & A-RC & EC (Ours) \\
   \hline 
   \multirow{26}{*}{Letters}  & A & 91.14 & 52.52 & 52.49 & 84.81 & 89.02 & 59.52 & 84.69 & 64.27 & 86.87 & {\bf 97.12} \\ \cline{2-12}
 & B & 83.41 & 52.73 & 52.57 & 75.95 & 82.41 & 56.13 & 75.89 & 61.38 & 80.18 & {\bf 93.76} \\ \cline{2-12}
 & C & 84.48 & 56.19 & 56.08 & 75.92 & 83.78 & 55.78 & 78.68 & 59.21 & 81.92 & {\bf 94.2} \\ \cline{2-12}
 & D & 83.51 & 52.57 & 52.45 & 76.14 & 82.23 & 56.09 & 76.03 & 61.51 & 78.83 & {\bf 93.73} \\ \cline{2-12}
 & E & 78.6 & 52.85 & 52.99 & 74.84 & 81.41 & 55.77 & 73.7 & 60.17 & 77.27 & {\bf 89.5} \\ \cline{2-12}
 & F & 83.63 & 53.4 & 53.41 & 78.72 & 83.16 & 57.44 & 77.83 & 64.69 & 80.88 & {\bf 94.0} \\ \cline{2-12}
 & G & 82.23 & 52.72 & 52.67 & 75.76 & 81.88 & 58.15 & 74.58 & 63.45 & 79.79 & {\bf 92.35} \\ \cline{2-12}
 & H & 69.07 & 52.4 & 52.63 & 61.19 & 69.51 & 53.95 & 64.15 & 52.03 & 66.94 & {\bf 81.78} \\ \cline{2-12}
 & I & 84.26 & 52.37 & 52.59 & 73.96 & 80.72 & 55.95 & 77.56 & 59.11 & 83.83 & {\bf 93.89} \\ \cline{2-12}
 & J & 84.24 & 54.45 & 54.35 & 75.16 & 81.72 & 56.22 & 77.06 & 58.58 & 81.61 & {\bf 94.28} \\ \cline{2-12}
 & K & 72.7 & 52.57 & 52.62 & 65.15 & 72.03 & 55.46 & 68.12 & 51.45 & 74.38 & {\bf 89.4} \\ \cline{2-12}
 & L & 80.98 & 52.59 & 52.67 & 72.74 & 79.87 & 55.44 & 77.73 & 59.16 & 78.57 & {\bf 93.18} \\ \cline{2-12}
 & M & 81.83 & 54.41 & 54.25 & 77.48 & 83.23 & 59.55 & 78.61 & 60.62 & 81.67 & {\bf 93.07} \\ \cline{2-12}
 & N & 75.15 & 52.67 & 52.68 & 68.58 & 75.31 & 55.37 & 69.43 & 55.85 & 73.74 & {\bf 89.8} \\ \cline{2-12}
 & O & 87.71 & 52.51 & 52.62 & 80.88 & 86.08 & 57.88 & 78.08 & 67.62 & 82.71 & {\bf 94.69} \\ \cline{2-12}
 & P & 84.76 & 52.7 & 52.7 & 79.24 & 84.47 & 57.63 & 79.18 & 64.81 & 83.19 & {\bf 93.47} \\ \cline{2-12}
 & Q & 82.25 & 53.22 & 53.08 & 74.97 & 80.09 & 55.71 & 74.66 & 60.49 & 79.55 & {\bf 92.18} \\ \cline{2-12}
 & R & 82.68 & 52.59 & 52.48 & 76.72 & 82.47 & 56.32 & 75.3 & 61.37 & 79.26 & {\bf 92.77} \\ \cline{2-12}
 & S & 76.51 & 52.83 & 52.91 & 70.42 & 75.85 & 55.24 & 72.01 & 59.66 & 75.84 & {\bf 89.27} \\ \cline{2-12}
 & T & 83.47 & 55.32 & 55.35 & 76.38 & 82.95 & 56.87 & 79.06 & 62.87 & 81.96 & {\bf 93.34} \\ \cline{2-12}
 & U & 78.05 & 52.91 & 53.05 & 73.72 & 81.28 & 55.86 & 72.81 & 58.56 & 76.41 & {\bf 92.7} \\ \cline{2-12}
 & V & 89.82 & 54.61 & 54.59 & 82.3 & 87.65 & 59.74 & 82.18 & 61.01 & 85.18 & {\bf 96.19} \\ \cline{2-12}
 & W & 90.15 & 55.25 & 55.29 & 84.57 & 89.11 & 59.37 & 82.04 & 63.1 & 86.65 & {\bf 96.79} \\ \cline{2-12}
 & X & 80.18 & 52.63 & 52.6 & 74.84 & 80.68 & 55.92 & 73.04 & 62.48 & 78.1 & {\bf 92.93} \\ \cline{2-12}
 & Y & 81.73 & 54.65 & 54.7 & 72.3 & 79.67 & 57.49 & 76.38 & 53.48 & 79.33 & {\bf 92.46} \\ \cline{2-12}
 & Z & 83.78 & 54.6 & 54.54 & 76.89 & 84.18 & 56.46 & 78.56 & 60.14 & 82.64 & {\bf 93.35} \\ 

        \hline
    \end{tabular}
    \caption{\label{table:letters}{\bf Letters}: Area under curve metric.}
\end{table*}

\begin{table*}
\centering
\begin{tabular}{ |c|c|c|c|c|c|c|c|c|c|c|c|}
        \hline
    Label & O & O-LS & A-LS & O-RS & A-RS & O-IF & A-IF & O-RC & A-RC & EC (Ours) \\
   \hline 
  5-o-Clock-Shadow & 20.17 & 15.54 & 15.55 & 17.31 & 18.28 & 17.89 & 18.85 & 20.25 & 22.49 & {\bf 23.85} \\ \hline
Arched-Eyebrows & 19.29 & 15.52 & 15.51 & 17.54 & 17.69 & 18.34 & 18.65 & 19.87 & 19.76 & {\bf 20.93} \\ \hline
Attractive & 18.4 & {\bf 19.19} & 18.32 & 18.6 & 18.7 & {\bf 18.86} & {\bf 18.89} & 18.03 & 17.93 & 18.61 \\ \hline
Bags-Under-Eyes & 17.06 & 15.5 & 15.51 & 16.58 & {\bf 17.54} & {\bf 17.26} & 17.11 & {\bf 17.3} & 16.33 & 16.97 \\ \hline
Bangs & {\bf 20.88} & 15.52 & 15.52 & {\bf 20.6} & 19.97 & 20.49 & {\bf 21.43} & {\bf 22.12} & 19.9 & {\bf 22.08} \\ \hline
Bald & 18.52 & 15.48 & 15.49 & 16.6 & 16.98 & 15.48 & 17.92 & NA & NA & {\bf 28.34} \\ \hline
Big-Lips & 15.81 & 15.51 & 15.5 & 15.44 & 14.91 & 15.26 & 15.46 & 15.61 & 15.68 & {\bf 16.35} \\ \hline
Big-Nose & 16.58 & 15.53 & 15.53 & 15.92 & 16.58 & 15.95 & 16.23 & 16.88 & 16.12 & {\bf 17.23} \\ \hline
Black-Hair & 22.87 & 15.49 & 15.5 & 20.8 & 19.43 & 21.35 & 22.52 & 21.23 & 20.5 & {\bf 24.6} \\ \hline
Blond-Hair & 36.36 & 15.67 & 15.67 & 31.41 & 34.64 & 35.92 & 37.84 & 37.31 & 39.24 & {\bf 41.72} \\ \hline
Blurry & 16.75 & 15.44 & 15.48 & 16.38 & 16.06 & 16.48 & 16.66 & 16.66 & 16.81 & {\bf 17.79} \\ \hline
Brown-Hair & 20.88 & 15.48 & 15.48 & 18.82 & 20.45 & 20.64 & {\bf 21.34} & 20.94 & 20.77 & {\bf 21.57} \\ \hline
Bushy-Eyebrows & 19.11 & 15.49 & 15.49 & 17.87 & 18.12 & 18.18 & 18.7 & 18.88 & 19.2 & {\bf 21.3} \\ \hline
Chubby & 16.77 & 15.51 & 15.5 & 15.98 & 16.2 & 15.83 & 16.15 & 16.51 & {\bf 19.01} & {\bf 18.63} \\ \hline
Double-Chin & 18.05 & 15.53 & 15.52 & 16.73 & 17.31 & 16.17 & 17.28 & 16.66 & {\bf 22.57} & {\bf 22.3} \\ \hline
Eyeglasses & 16.48 & 15.51 & 15.5 & 16.14 & {\bf 17.12} & {\bf 16.48} & {\bf 16.78} & {\bf 17.74} & 15.88 & 15.44 \\ \hline
Goatee & 17.57 & 15.49 & 15.51 & 16.86 & 16.69 & 16.22 & 16.93 & 16.98 & 17.98 & {\bf 19.81} \\ \hline
Gray-Hair & 23.19 & 15.53 & 15.55 & 19.55 & 21.77 & 17.01 & 23.12 & 20.84 & {\bf 32.86} & {\bf 31.66} \\ \hline
Heavy-Makeup & 21.42 & 15.85 & 15.58 & 20.49 & 20.5 & 21.24 & {\bf 21.94} & 21.29 & 20.4 & {\bf 22.45} \\ \hline
High-Cheekbones & 19.17 & 15.38 & 15.17 & 18.59 & 18.79 & 18.91 & 18.89 & 19.78 & 18.77 & {\bf 20.07} \\ \hline
Male & 18.64 & 15.49 & 15.57 & {\bf 20.09} & 19.36 & 19.14 & 18.8 & 18.43 & 16.5 & 19.24 \\ \hline
Mouth-Slightly-Open & 17.37 & 15.66 & 15.56 & 17.43 & 17.34 & 17.42 & 17.16 & {\bf 17.79} & 17.2 & {\bf 17.77} \\ \hline
Mustache & 17.16 & 15.5 & 15.48 & 16.77 & 16.46 & 15.98 & 16.99 & 16.64 & 17.05 & {\bf 18.13} \\ \hline
Narrow-Eyes & 15.48 & 15.49 & 15.5 & 15.66 & {\bf 15.95} & {\bf 15.78} & {\bf 15.78} & 15.68 & 15.36 & 14.83 \\ \hline
No-Beard & 15.82 & {\bf 16.38} & {\bf 16.37} & 15.79 & 15.93 & 16.0 & 15.99 & 16.08 & 15.98 & 15.83 \\ \hline
Oval-Face & 18.02 & 15.5 & 15.5 & 16.76 & 17.11 & 17.18 & 17.37 & 18.56 & 18.37 & {\bf 19.22} \\ \hline
Pale-Skin & 18.15 & 15.45 & 15.48 & {\bf 19.89} & {\bf 19.15} & 16.95 & {\bf 20.95} & {\bf 18.75} & 17.94 & {\bf 19.55} \\ \hline
Pointy-Nose & 18.55 & 15.49 & 15.49 & 17.03 & 17.22 & 17.23 & 17.75 & 18.69 & 18.8 & {\bf 19.97} \\ \hline
Receding-Hairline & 18.5 & 15.5 & 15.5 & 17.09 & 17.47 & 16.67 & 17.55 & 18.61 & {\bf 22.2} & {\bf 22.38} \\ \hline
Rosy-Cheeks & 23.96 & 15.5 & 15.51 & 19.77 & 22.49 & 18.5 & 22.3 & 22.45 & 32.64 & {\bf 34.69} \\ \hline
Sideburns & 18.32 & 15.51 & 15.52 & 17.23 & 17.19 & 16.9 & 17.39 & 17.41 & 19.36 & {\bf 21.77} \\ \hline
Smiling & 19.46 & 16.01 & 15.52 & 18.97 & 19.05 & 19.31 & 19.15 & 19.89 & 18.81 & {\bf 20.15} \\ \hline
Straight-Hair & {\bf 16.1} & 15.5 & 15.5 & {\bf 15.97} & {\bf 16.25} & 15.91 & {\bf 16.02} & {\bf 16.24} & {\bf 16.13} & {\bf 16.29} \\ \hline
Wavy-Hair & 21.0 & 15.52 & 15.52 & 20.18 & 20.09 & 20.56 & 20.97 & 21.13 & 20.42 & {\bf 21.88} \\ \hline
Wearing-Earrings & 18.75 & 15.47 & 15.47 & 16.93 & 17.87 & 17.75 & 18.37 & 19.28 & 20.0 & {\bf 20.62} \\ \hline
Wearing-Hat & {\bf 18.32} & 15.48 & 15.49 & {\bf 18.33} & {\bf 18.27} & 17.44 & {\bf 19.95} & {\bf 20.12} & 15.16 & 16.82 \\ \hline
Wearing-Lipstick & 20.42 & 19.0 & 17.34 & 19.64 & 19.65 & 20.49 & {\bf 20.8} & 20.21 & 19.51 & {\bf 21.06} \\ \hline
Wearing-Necklace & 18.99 & 15.48 & 15.49 & 16.72 & 17.85 & 17.54 & 18.15 & 19.44 & {\bf 21.28} & {\bf 21.48} \\ \hline
Wearing-Necktie & 19.56 & 15.52 & 15.51 & 17.51 & 18.13 & 16.53 & 18.36 & 18.45 & {\bf 23.79} & {\bf 24.46} \\ \hline
Young & 15.51 & {\bf 16.22} & {\bf 16.21} & 15.54 & 15.63 & 15.55 & 15.55 & 15.56 & 15.51 & 15.57 \\ 
        \hline
    \end{tabular}
    \caption{\label{table:celeba}{\bf CelebA}: Area under curve metric. We note that for Bald, there were no results for the Robust Covariance metrics. This is because due to the low rate of positive examples, it was not possible to tune Robust Covariance's hyper-parameters via cross-validation on the initial sample.}
\end{table*}

}

\end{document}